\DeclarePairedDelimiter{\group}{(}{)}
\DeclarePairedDelimiter{\set}{\{}{\}}
\DeclarePairedDelimiter{\abs}{\vert}{\vert}
\newcommand{\naturals}{\mathbb{N}}
\newcommand{\reals}{\mathbb{R}}
\newcommand{\posreals}{\reals_{>0}}
\newcommand{\nonnegreals}{\reals_{\geq0}}
\newcommand{\states}{\mathcal{X}}
\newcommand{\gbls}{\mathcal{L}}
\newcommand{\gblson}[1]{\gbls(#1)}
\newcommand{\gblsonstates}{\gblson{\states}}
\newcommand{\optgt}[1][]{\succ_{#1}}
\newcommand{\opt}[1][]{u_{#1}}
\newcommand{\altopt}[1][]{v_{#1}}
\newcommand{\opts}{\mathcal{V}} 
\newcommand{\posopts}{\opts_{\optgt0}}
\newcommand{\optset}[1][]{A_{#1}}
\newcommand{\altoptset}[1][]{B_{#1}}
\newcommand{\optsets}{\mathcal{Q}}
\newcommand{\assessment}{\mathcal{A}}
\newcommand{\desirset}[1][]{D_{#1}}
\newcommand{\desirsets}{\mathbf{D}}
\newcommand{\rejectset}[1][]{K_{#1}}
\newcommand{\setofdesirsets}{\mathcal{D}}
\newcommand{\choicefun}[1][]{C_{#1}}
\newcommand{\rejectfun}[1][]{R_{#1}}
\newcommand{\ddualopts}[1][]{\opts^\circ}
\newcommand{\ldualopts}[1][]{\underline{\opts}^\ast}
\newcommand{\nml}[1][{\opt[o]}]{\normalise_{#1}}
\newcommand{\cset}[3][]{\set[#1]{#2\colon#3}}
\newcommand{\then}{\Rightarrow}
\DeclareMathOperator{\posi}{posi}
\DeclareMathOperator{\normalise}{N}
\begin{document}

\title*{Choice functions based on
sets of strict partial orders:
an axiomatic characterisation}
\titlerunning{Choice functions based on
sets of strict partial orders}
\author{Jasper De Bock}
\institute{Jasper De Bock \at Ghent University, ELIS, Flip, \email{jasper.debock@ugent.be}
}
%
%
\maketitle


\abstract{Methods for choosing from a set of options are often based on a strict partial order on these options, or on a set of such partial orders. I here provide a very general axiomatic characterisation for choice functions of this form. It includes as special cases axiomatic characterisations for choice functions based on (sets of) total orders, (sets of) weak orders, (sets of) coherent lower previsions and (sets of) probability measures.}

\section{Introduction}

A choice function $\choicefun$ is a simple mathematical model for representing choices: for any set of options $\optset$, it returns a subset $\choicefun(\optset)\subseteq\optset$ of options that are ``chosen'' from $\optset$, in the sense that the options in $\rejectfun(\optset)\coloneqq\optset\setminus\choicefun(\optset)$ are rejected and that the options in $\choicefun(\optset)$ are deemed incomparable \citep{seidenfeld2010,
2018vancamp:lexicographic,
pmlr-v103-de-bock19b}.

Such a choice function is often derived from a strict partial order $\succ$ on options, by choosing the options that are maximal---or undominated---with respect to this ordering, or equivalently, by rejecting the options that are dominated:
\begin{equation}\label{eq:Cfromorder}
\choicefun[\succ](\optset)\coloneqq\big\{\opt\in\optset\colon(\nexists\altopt\in\optset)~\altopt\succ\opt\big\}
\end{equation}
More generally, we can also associate a choice function with any set $\mathcal{O}$ of such strict partial orders, by choosing the options in $\optset$ that are maximal with respect to at least one of the considered orderings:
\begin{equation}\label{eq:Cfromorders}
\choicefun[\mathcal{O}](\optset)
\coloneqq\bigcup_{\succ\in\hspace{0.4pt}\raisebox{-0.6pt}{\scriptsize $\mathcal{O}$}}\choicefun[\succ](\optset)
=\{\opt\in\optset\colon(\exists \succ\,\in\raisebox{-0.7pt}{$\mathcal{O}$})\,(\nexists\altopt\in\optset)~\altopt\succ\opt\big\}
\end{equation}
This approach is conservative because $\choicefun[\mathcal{O}]$ will only reject an option if it is rejected---or dominated---with respect to each of the partial orders $\succ$ in $\mathcal{O}$; it can for example be used to represent conservative group decisions, by interpreting every $\succ$ in $\mathcal{O}$ as the preferences of a different group member. 

Choice functions of the form $\choicefun[\succ]$ and $\choicefun[\mathcal{O}]$ appear in various settings, in all sorts of forms and variations, depending on what the options are and which kinds of properties are imposed on the partial orders involved. Maximising expected utility, for example, is a well-known special case of a choice function of the type $\choicefun[\succ]$, where the options are utility functions and one chooses the option(s) whose expected utility with respect to some given probability measure is highest. If the probability measure involved is only know to belong to a set---for example because different group members assign different probabilities---this naturally extends to a choice function of the type $\choicefun[\mathcal{O}]$. 

The first main contribution of this paper is a set of necessary and sufficient conditions for a general choice function $\choicefun$ to be of the form $\choicefun[\succ]$ or $\choicefun[\mathcal{O}]$, for the case where options are elements of a real vector space. More generally, I provide generic necessary and sufficient conditions for the representing orders to satisfy additional properties, provided these properties are expressable in an abstract rule-based form. This leads in particular to representation theorems for choice functions that are based on (sets of) total orders, (sets of) weak orders, (sets of) coherent lower previsions, (sets of) probability measures, and potentially many other types of uncertainty models.
\iftoggle{arxiv}{
	
}{
	Proofs are omitted; they are available in the appendix of an extended online version \citep{smps2020debock:arxiv}.
}

\section{Choice functions based on (sets of) proper orderings}\label{sec:choicefunctionsfromproperorderings}

Let $\opts$ be a real vector space, the elements of which we call options, let $\optsets$ be the set of all---possibly infinite---subsets of $\opts$, excluding the empty set, and let $\optsets_{\emptyset}\coloneqq\optsets\cup\{\emptyset\}$. A choice function $\choicefun\colon\optsets\to\optsets_{\emptyset}$ is a function that, for any option set $\optset\in\optsets$, returns a subset $\choicefun(\optset)$ consisting of the options in $\optset$ that are chosen, or rather, not rejected. 
The elements of $\choicefun(\optset)$ are deemed incomparable when it comes to choosing from $\optset$. We do not exclude the possibility that $\choicefun(\optset)=\emptyset$; this may for example be reasonable if $\optset$ is an infinite option set whose elements are linearly ordered. Imagine choosing the highest natural number, for example. Every natural number is rejected, yet none can reasonably be chosen.


We are particularly interested in choice functions of the form $\choicefun[\succ]$ or $\choicefun[\mathcal{O}]$, corresponding to a strict partial order $\succ$ on $\opts$ or to a set $\mathcal{O}$ of such orders, respectively. 
Besides being strict partial orders (\ref{ax:order:irreflexive} and~\ref{ax:order:transitive} below), we will also require these orders to be compatible with the vector space operations of the real vector space $\opts$ (\ref{ax:order:multiplication} and~\ref{ax:order:addition}). We will call such orders proper.

\begin{definition}
\label{def:proporder}
A binary relation $\succ$ on $\opts$ is a \emph{proper order} if, for all $\opt,\altopt,w\in\opts$ and $\lambda\in\posreals\coloneqq\{\lambda\in\reals\colon\lambda>0\}$, it satisfies the following properties:
\begin{enumerate}[label=$\mathrm{PO}_{\arabic*}$.,ref=$\mathrm{PO}_{\arabic*}$,leftmargin=*]
\item\label{ax:order:irreflexive} $\opt\not\succ\opt$\hfill irreflexivity
\item\label{ax:order:transitive} if $\opt\succ\altopt$ and $\altopt\succ w$, then also $\opt\succ w$\hfill transitivity
\item\label{ax:order:multiplication} if $\opt\succ\altopt$ then also $\lambda\opt\succ\lambda\altopt$
\item\label{ax:order:addition} if $\opt\succ\altopt$ then also $\opt+w\succ\altopt+w$.
\end{enumerate}
We denote the set of all proper orders by $\mathbf{O}$.
\end{definition}

Since \ref{ax:order:addition} implies that $u\succ v$ is equivalent to $u-v\succ0$, every proper order $\succ$ is completely characterised by the set of options
\begin{equation}\label{eq:fromOtoD}
\desirset[\succ]\coloneqq\{\opt\in\opts\colon\opt\succ0\}.
\end{equation}

In fact, as established in Proposition~\ref{prop:orderequivD}, proper orders are in one-to-one correspondence with convex cones $\desirset$ in $\opts$ that are blunt, meaning that they do not include $0$. We call any such $\desirset$ a proper set of options.

\begin{definition}
\label{def:propdesir}
A \emph{set of options} is a---possibly empty---subset $\desirset$ of $\opts$. It is called \emph{proper} if it satisfies the following properties:
\begin{enumerate}[label=$\mathrm{PD}_{\arabic*}$.,ref=$\mathrm{PD}_{\arabic*}$,leftmargin=*]
\item\label{ax:desirs:nozero} $0\notin\desirset$
\item\label{ax:desirs:sum} if $\opt\in\desirset$ and $\altopt\in\desirset$, then $\opt+\altopt\in\desirset$
\item\label{ax:desirs:lambda} if $\opt\in\desirset$ and $\lambda\in\posreals$, then $\lambda\opt\in\desirset$.
\end{enumerate}
We denote the set of all proper sets of options by $\desirsets$.
\end{definition}

\begin{proposition}\label{prop:orderequivD}
A binary relation $\succ$ on $\opts$ is a proper order if and only if there is a proper set of options $\desirset$ such that
\begin{equation}\label{eq:fromDtoO}
\opt\succ\altopt
\Leftrightarrow
\opt-\altopt\in\desirset
\text{ for all $\opt,\altopt\in\opts$.}
\end{equation}
This $\desirset$ is then necessarily unique and equal to $\desirset[\succ]$.
\end{proposition}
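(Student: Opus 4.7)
The plan is to prove both implications directly, verifying each axiom by reducing it to its counterpart on the other side, and then to handle uniqueness by a one-line identification of the candidate set with $\desirset[\succ]$.

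For the forward direction, I would assume that $\succ$ is a proper order and take $\desirset \coloneqq \desirset[\succ] = \{u \in \opts \colon u \succ 0\}$. The equivalence $u \succ v \Leftrightarrow u - v \in \desirset$ follows from \ref{ax:order:addition} applied twice: adding $-v$ to both sides of $u \succ v$ yields $u - v \succ 0$, and adding $v$ to both sides of $u - v \succ 0$ yields $u \succ v$. To verify that $\desirset$ is proper, I would check the three axioms in order. PD$_1$ is immediate from \ref{ax:order:irreflexive}: if $0 \in \desirset$ then $0 \succ 0$, contradicting irreflexivity. PD$_3$ follows from \ref{ax:order:multiplication}: if $u \succ 0$ and $\lambda > 0$, then $\lambda u \succ \lambda \cdot 0 = 0$. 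PD$_2$ requires combining \ref{ax:order:addition} and \ref{ax:order:transitive}: given $u, v \in \desirset$, from $u \succ 0$ I get $u + v \succ v$ by \ref{ax:order:addition}, and then transitivity with $v \succ 0$ yields $u + v \succ 0$.

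For the converse, I would assume such a proper $\desirset$ exists and verify the four order axioms via the equivalence. \ref{ax:order:irreflexive} is immediate, since $u \succ u$ would place $0 = u - u$ in $\desirset$, violating PD$_1$. \ref{ax:order:transitive} reduces to PD$_2$ by writing $u - w = (u - v) + (v - w)$. \ref{ax:order:multiplication} reduces to PD$_3$ via $\lambda u - \lambda v = \lambda(u - v)$. Finally, \ref{ax:order:addition} is trivial because $(u + w) - (v + w) = u - v$, so the two conditions are literally the same.

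For uniqueness, I would observe that any $\desirset'$ satisfying the equivalence must contain precisely those $u$ with $u \succ 0$ (take $v = 0$ in the equivalence), and so coincides with $\desirset[\succ]$.

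None of these steps is genuinely obstructive; the only non-routine calculation is showing PD$_2$ in the forward direction, which needs both \ref{ax:order:addition} and \ref{ax:order:transitive} rather than either one alone. The main care required is keeping the roles of $\succ$ and $\desirset$ straight and invoking the correct axiom at each step.
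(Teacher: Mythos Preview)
Your proposal is correct and follows essentially the same approach as the paper: the same axiom-by-axiom reductions (including the combined use of \ref{ax:order:addition} and \ref{ax:order:transitive} for \ref{ax:desirs:sum}), with only a minor organisational difference in that the paper folds the uniqueness argument into the `if'-part rather than treating it separately.
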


It follows that choice functions of the form $\choicefun[\succ]$ are completely determined by a single proper set of options $\desirset$, whereas choice functions of the form $\choicefun[\mathcal{O}]$ are characterised by a set $\setofdesirsets$ of such proper sets of options.

\section{Imposing additional properties through sets of rules}\label{sec:abstractaxiom}

In most cases, rather than consider (sets of) arbitrary proper orders, one whishes to consider particular types of such orders, by imposing additional properties besides \ref{ax:order:irreflexive}--\ref{ax:order:addition}. 
Rather than treat each of these types seperately, we will instead consider an abstract axiom that includes a variety of them as special cases. We will characterise this axiom by means of a set of rules. Each such rule is a pair $(\mathcal{A},B)$, with $\mathcal{A}\subseteq\optsets$ and $B\in\optsets_{\emptyset}$. A set of rules $\mathcal{R}$, therefore, is a subset of $\mathcal{P}(\optsets)\times\optsets_{\emptyset}$, with $\mathcal{P}(\optsets)$ the powerset of $\mathcal{Q}$.

\begin{definition}\label{def:abstractaxiom:D}
Let $\mathcal{R}$ be a set of rules. For any proper set of options $\desirset$, we then say that $\desirset$ is $\mathcal{R}$-compatible if for all $(\mathcal{A},B)\in\mathcal{R}$:
\begin{enumerate}[label=$\mathrm{PD}_{\mathrm{\mathcal{R}}}$.,ref=$\mathrm{PD}_{\mathrm{\mathcal{R}}}$,leftmargin=*]
\item\label{ax:desirsets:abstractaxiom} if $A\cap\desirset\neq\emptyset$ for all $A\in\mathcal{A}$, then also $B\cap\desirset\neq\emptyset$.
\end{enumerate}
A proper order $\succ$ is called $\mathcal{R}$-compatible if $\desirset[\succ]$ is $\mathcal{R}$-compatible. 
We let $\desirsets_{\mathcal{R}}$ and $\mathbf{O}_{\mathcal{R}}$ be the set of all $\mathcal{R}$-compatible proper sets of options and orders, respectively.
\end{definition}

A first important special case are (sets of) rules of the type $(\emptyset,\altoptset)$. Since $\assessment=\emptyset$ makes the premise of \ref{ax:desirsets:abstractaxiom} trivially true, such a rule allows one to express that $\altoptset$ should contain at least one option $\opt$ such that $\opt\succ0$.
If $\opts$ is the set $\gblsonstates$ of all gambles---bounded real functions---on some state space $\states$, coherence of $\desirset[\succ]$ for example corresponds to the set of rules
\begin{equation*}\mathcal{R}_{\mathrm{C}}
\coloneqq\big\{
  (\emptyset,\{\opt\})
  \colon
  \opt\in\posopts
\big\},
\end{equation*}
with $\posopts$ equal to $\{\opt\in\opts\colon\inf\opt>0\}$ or $\{\opt\in\opts\colon\inf\opt\geq0, \opt\neq0\}$, depending on the specific type of coherence~\citep{quaeghebeur2012:itip}. In both cases, this set of rules imposes that $\posopts$ should be a subset of $\desirset[\succ]$. A second example are total orders. These correspond to the set of rules
\begin{equation}\label{eq:abstracttotality}
\mathcal{R}_{\mathrm{T}}
\coloneqq
\big\{
  (\emptyset,\{\opt,-\opt\})
  \colon
  \opt\in\opts\setminus\set{0}
\big\},
\end{equation}
which imposes that $\opt\succ0$ or $-\opt\succ0$ for all $\opt\neq0$. Given the properness of $\succ$, this is equivalent to the totality of $\succ$, meaning that $\opt\succ\altopt$ or $\altopt\succ\opt$ for all $\opt,\altopt\in\opts$ such that $\opt\neq\altopt$.

Another special case are rules of the type $(\assessment,\emptyset)$. Since $\altoptset\cap\desirset\neq\emptyset$ cannot be true for $\altoptset=\emptyset$, rules of this type allow one to express that there should be at least one option set $\optset\in\assessment$ such that $\opt\not\succ0$ for all $\opt\in\optset$. If we for example want to enforce that $\opt[i]\not\succ\altopt[i]$ for all $i$ in some index set $I$, it suffices to let $\mathcal{A}=\{\optset\}$, with $\optset=\{\opt[i]-\altopt[i]\colon i\in I\}$.



Besides total orders, we can also obtain weak orders as a special case of $\mathcal{R}$-compatibility. This can be achieved by imposing any of the following three equivalent sets of rules:
\begin{align*}
\mathcal{R}_{\mathrm{W}}
&\coloneqq
\big\{
	(\{\{u+v,-u-v\}\},\{u,-u,v,-v\})\colon u,v\in\opts
\big\}\\
\mathcal{R}_{\mathrm{W2}}
&\coloneqq
\big\{
  (\{\{u+v\}\},\{u,v\})
  \colon
  u,v\in\opts
\big\}\\
\mathcal{R}_{\mathrm{M}}
&\coloneqq
\big\{
  (\{\posi(\altoptset)\},\altoptset)
  \colon
  \altoptset\in\optsets
  \text{ and }\abs{\altoptset}<+\infty
\big\},
\end{align*}
with
$\posi(\altoptset)\coloneqq
\cset[\big]{\textstyle\sum_{i=1}^n\lambda_i\opt[i]}{n\in\naturals,\lambda_i>0,\opt[i]\in\altoptset}$ the set of finite positive linear combinations of options in $\altoptset$. The first set of rules, $\mathcal{R}_{\mathrm{W}}$, is most closely related to the defining property that makes a strict partial order weak, which is that the incomparability relation---$\opt\not\succ\altopt$ and $\altopt\not\succ\opt$---should be transitive. The second set of rules, $\mathcal{R}_{\mathrm{W}}$, is the simplest yet perhaps least intuitive of the three. The third set of rules, $\mathcal{R}_{\mathrm{M}}$, corresponds to the so-called mixingness of $\desirset[\succ]$~\citep{pmlr-v103-de-bock19b}; it also shows that a strict partial order $\succ$ is weak if and only if $\desirset[\succ]$ is lexicographic~\citep{2018vancamp:lexicographic}, in the sense that its complement $\desirset[\succ]^{\mathrm{c}}$ is a convex cone. The equivalence of these three rules, as well as the fact that they indeed correspond to $\succ$ being a weak order, follows from the properness of $\succ$.

The notion of $\mathcal{R}$-compatibility can also be used to impose several properties at once; it suffices to let $\mathcal{R}$ be the union of the sets of rules that correspond to the individual properties.
 

For example, if $\opts=\gblsonstates$ and $\posopts=\{\opt\in\opts\colon\inf\opt>0\}$, then partial orders that correspond to coherent lower previsions can be obtained by combining coherence with Archimedeanity~\citep{pmlr-v103-de-bock19b}. Coherence, as explained above, corresponds to $\mathcal{R}_{\mathrm{C}}$-compatibility. Archimedeanity of $\desirset[\succ]$, on the other hand, corresponds to the set of rules 
\begin{equation}\label{eq:abstractarchimedeanity}
\mathcal{R}_{\mathrm{A}}
\coloneqq
\big\{
  (\{\{\opt\}\},\{\opt-\epsilon\colon\epsilon\in\posreals\})
  \colon
  \opt\in\opts
\big\},
\end{equation}
where we identify $\epsilon\in\posreals$ with the constant gamble on $\states$ with value $\epsilon$. To impose both properties together, it suffices to consider the single set of rules $\mathcal{R}_{\mathrm{CA}}\coloneqq\mathcal{R}_{\mathrm{C}}\cup\mathcal{R}_{\mathrm{A}}$.

If besides coherence and Archimedeanity of $\desirset[\succ]$, we additionally impose that $\succ$ should be a weak order---or equivalently, that $\desirset[\succ]$ should be mixing---the representing lower previsions become linear expectations, resulting in orders that correspond to maximising expected utility with respect to a finitely additive probability measure~\citep{pmlr-v103-de-bock19b}. This can for example be achieved by imposing $\mathcal{R}_{\mathrm{CAM}}\coloneqq\mathcal{R}_{\mathrm{C}}\cup\mathcal{R}_{\mathrm{A}}\cup\mathcal{R}_{\mathrm{M}}$, where instead of $\mathcal{R}_{\mathrm{M}}$, we could have also used $\mathcal{R}_{\mathrm{W}}$ or $\mathcal{R}_{\mathrm{W2}}$.

As a final example, orders that are evenly continuous \citep{cozman2018:even:convexity} can be obtained by imposing $\mathcal{R}_{\mathrm{CE}}\coloneqq\mathcal{R}_{\mathrm{C}}\cup\mathcal{R}_{\mathrm{E}}$, with
\begin{equation*}
\mathcal{R}_{\mathrm{E}}
\coloneqq
\big\{
	(\{\{\opt[i]\}\colon i\in\naturals\}\cup\{\{\lim_{i\to\infty}(\lambda_i\altopt-\opt[i])\}\},\{\altopt\})
	\colon
\opt[i],\altopt\in\opts, \lambda_i\in\posreals
\big\}.
\end{equation*}

\section{Proper choice functions}

In order to motivate the use of choice functions of the form $\choicefun[\succ]$ and $\choicefun[\mathcal{O}]$, without directly assuming that they must be of this type, we now proceed to provide an axiomatic characterisation for such types of choice functions. We start by introducing the notion of a proper choice function. As we will see in Theorem~\ref{theo:properrepresentation}, these are exactly the choice functions of the form $\choicefun[\mathcal{O}]$.

To state the defining axioms of a proper choice function, we require some additional notation. First, for any $\optset\in\optsets$ and $w\in\opts$, we let $\optset+w\coloneqq\{\opt+w\colon\opt\in\optset\}$, and similarly for $\optset-w$. One of the properties of a proper choice function---see~\ref{ax:choice:translation} further on---will be that rejecting $\opt$ from $\optset$ is equivalent to rejecting $0$ from $\optset-\opt$. Similarly to how a proper order $\succ$ is completely determined by $\desirset[\succ]$, a proper choice function $\choicefun$ will therefore be completely determined by the sets from which it rejects zero. In particular, as we will see in Proposition~\ref{prop:CequivK}, the role of $\desirset[\succ]$ is now taken up by
\begin{equation}\label{eq:fromCtoK}
\rejectset[\choicefun]\coloneqq\{\optset\in\optsets\colon 0\notin\choicefun(\optset\cup\{0\}).
\end{equation}

Next, for any $\assessment\subseteq\optsets$, we denote by $\Phi_\assessment$ the collection of all maps $\phi\colon\assessment\to\opts$ that, for all $\optset\in\assessment$, select a single option $\phi\group{\optset}\in\optset$. 
Furthermore, for any such selection map $\phi\in\Phi_\assessment$, we 
let $\phi(\assessment)\coloneqq\{\phi(\optset)\colon\optset\in\assessment\}$ be the corresponding set of selected options. If $\assessment\neq\emptyset$, we also let $\Lambda_\assessment$ be the set of all maps $\lambda:\assessment\to\nonnegreals$ with \emph{finite} non-empty support, so $\lambda(\optset)>0$ for finitely many---but at least one---$\optset$ in $\assessment$ and $\lambda(\optset)=0$ otherwise. Properness for choice functions is now defined as follows.

\begin{definition}
\label{def:properchoicefunction}
A choice function $\choicefun$ is \emph{proper} if it satisfies the following properties:
\begin{enumerate}[label=$\mathrm{PC}_{\arabic*}$.,ref=$\mathrm{PC}_{\arabic*}$,leftmargin=*,start=0]
\item\label{ax:choice:singleton}
$\choicefun(\{\opt\})=\{\opt\}$ for all $\opt\in\opts$
\item\label{ax:choice:translation}
if $\opt\in\choicefun(\optset)$, then also $\opt+w\in\choicefun(\optset+w)$, for all $\optset\in\optsets$ and $\opt,w\in\opts$
\item\label{ax:choice:cone} if $\emptyset\neq\assessment\subseteq\rejectset[\choicefun]$ and if, for all $\phi\in\Phi_\assessment$, $\lambda_\phi\in\Lambda_\assessment$, then also
\begin{equation*}
\textstyle
\big\{\sum_{A\in\assessment}\lambda_\phi(A)\phi(A)\colon\phi\in\Phi_\assessment\big\}
\in\rejectset[\choicefun];
\end{equation*}
\item\label{ax:choice:mono} if $\opt\in\optset\subseteq\altoptset$ and $\opt\in\choicefun(\altoptset)$, then also $\opt\in\choicefun(\optset)$, for all $\optset,\altoptset\in\optsets$.
\end{enumerate}
\end{definition}

Each of these axioms can---but need not---be motivated by interpreting $\opt\in\choicefun(\optset)$ as `there is no $\altopt$ in $\optset$ that is better than $\opt$', where the `better than' relation satisfies the defining properties of a proper order. For \ref{ax:choice:singleton},~\ref{ax:choice:translation} and~\ref{ax:choice:mono}, this should be intuitively clear. For \ref{ax:choice:cone}, the starting point is that for any $\optset\in\rejectset[\choicefun]$, there is at least one $\altopt\in\optset$ that is better than zero. Hence, for any $\assessment\subseteq\rejectset$, there is at least one $\phi\in\Phi_{\assessment}$ such that every element of $\phi(\assessment)$ is better than zero. Combined with \ref{ax:order:transitive}--\ref{ax:order:addition}, or actually, \ref{ax:desirs:sum} and~\ref{ax:desirs:lambda}, it follows that $\sum_{A\in\assessment}\lambda_\phi(A)\phi(A)$ should be better than zero as well.

Every proper choice function $\choicefun$ is completely determined by a so-called proper set of option sets $\rejectset$, which is furthermore unique and equal to $\rejectset[\choicefun]$.

\begin{definition}
\label{def:coherence:infiniteoptionsets}
A \emph{set of option sets} is a---possibly empty---subset $\rejectset$ of $\optsets_{\emptyset}$. It is called \emph{proper} if it satisfies the following properties:
\begin{enumerate}[label=$\mathrm{PK}_{\arabic*}$.,ref=$\mathrm{PK}_{\arabic*}$,leftmargin=*,start=0]
\item\label{ax:infiniteoptionsets:noempty} $\emptyset\notin\rejectset$;
\item\label{ax:infiniteoptionsets:removezero} if $\optset\in\rejectset$ then also $\optset\setminus\set{0}\in\rejectset$;
\item\label{ax:infiniteoptionsets:cone} if $\emptyset\neq\assessment\subseteq\rejectset$ and if, for all $\phi\in\Phi_\assessment$, $\lambda_\phi\in\Lambda_\assessment$, then also
\begin{equation*}
\big\{\textstyle\sum_{A\in\assessment}\lambda_\phi(A)\phi(A)\colon\phi\in\Phi_\assessment\big\}
\in\rejectset;
\end{equation*}
\item\label{ax:infiniteoptionsets:mono} if $\optset\in\rejectset$ and $\optset\subseteq\altoptset\in\optsets$, then also $\altoptset\in\rejectset$.
\end{enumerate}
\end{definition}

\begin{proposition}\label{prop:CequivK}
A choice function $\choicefun$ is proper if and only if there is a proper set of option sets $\rejectset$ such that
\begin{equation}\label{eq:fromKtoC}
\choicefun(\optset)=
\big\{
	\opt\in\optset
	\colon
	\optset-\opt\notin\rejectset
\big\}
\text{ for all $\optset\in\optsets$}.
\end{equation}
This $\rejectset$ is then necessarily unique and equal to $\rejectset[\choicefun]$.
\end{proposition}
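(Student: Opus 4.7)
The plan is to prove both directions and uniqueness in parallel. For necessity, assume $\choicefun$ is proper and verify that $\rejectset[\choicefun]$ satisfies \ref{ax:infiniteoptionsets:noempty}--\ref{ax:infiniteoptionsets:mono} and that the formula~\eqref{eq:fromKtoC} holds. \ref{ax:infiniteoptionsets:noempty} is immediate since $\rejectset[\choicefun] \subseteq \optsets$ by construction. \ref{ax:infiniteoptionsets:removezero} follows from the observation that $\optset\cup\{0\} = (\optset\setminus\{0\})\cup\{0\}$, so the defining condition ``$0\notin\choicefun(\optset\cup\{0\})$'' is the same for $\optset$ and for $\optset\setminus\{0\}$; the only thing to check is that $\optset\setminus\{0\}\neq\emptyset$, which holds because \ref{ax:choice:singleton} forces $\{0\}\notin\rejectset[\choicefun]$. \ref{ax:infiniteoptionsets:cone} is immediate because \ref{ax:choice:cone} is stated verbatim in terms of $\rejectset[\choicefun]$. \ref{ax:infiniteoptionsets:mono} follows from the contrapositive of \ref{ax:choice:mono} applied to the element $0$: if $\optset \in \rejectset[\choicefun]$ and $\optset\subseteq\altoptset$, then $0\in\optset\cup\{0\}\subseteq\altoptset\cup\{0\}$ together with $0\notin\choicefun(\optset\cup\{0\})$ yields $0\notin\choicefun(\altoptset\cup\{0\})$, so $\altoptset\in\rejectset[\choicefun]$.

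For the formula itself, fix $\optset\in\optsets$ and $\opt\in\optset$. By \ref{ax:choice:translation} applied with $w=-\opt$, we have $\opt\in\choicefun(\optset) \iff 0\in\choicefun(\optset-\opt)$. Since $\opt\in\optset$ gives $0\in\optset-\opt$, we have $\optset-\opt = (\optset-\opt)\cup\{0\}$, and hence $0\in\choicefun(\optset-\opt) \iff 0\in\choicefun((\optset-\opt)\cup\{0\}) \iff \optset-\opt\notin\rejectset[\choicefun]$, which is what we wanted.

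For sufficiency, let $\rejectset$ be a proper set of option sets and define $\choicefun$ by the formula. \ref{ax:choice:singleton} reduces to $\{0\}\notin\rejectset$, which follows by combining \ref{ax:infiniteoptionsets:noempty} with \ref{ax:infiniteoptionsets:removezero}. \ref{ax:choice:translation} is immediate from $(\optset+w)-(\opt+w)=\optset-\opt$. To handle \ref{ax:choice:cone} and \ref{ax:choice:mono}, the key preliminary is to establish that $\rejectset[\choicefun]=\rejectset$: unfolding definitions gives $\optset\in\rejectset[\choicefun] \iff \optset\cup\{0\}\in\rejectset$, and one direction uses \ref{ax:infiniteoptionsets:mono} to pass from $\optset$ to $\optset\cup\{0\}$, while the other uses \ref{ax:infiniteoptionsets:removezero} followed by \ref{ax:infiniteoptionsets:mono} to go from $(\optset\cup\{0\})\setminus\{0\}=\optset\setminus\{0\}$ back up to $\optset$. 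With this identification in hand, \ref{ax:choice:cone} is precisely \ref{ax:infiniteoptionsets:cone}, and \ref{ax:choice:mono} reduces via the defining formula and the contrapositive of \ref{ax:infiniteoptionsets:mono} applied to $\optset-\opt\subseteq\altoptset-\opt$.

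Uniqueness is then a direct corollary of the computation $\rejectset[\choicefun]=\rejectset$ just carried out, since any proper $\rejectset$ satisfying the formula must coincide with $\rejectset[\choicefun]$. The only real subtlety throughout is the bookkeeping around the element $0$---the mutually reinforcing roles of \ref{ax:infiniteoptionsets:noempty}, \ref{ax:infiniteoptionsets:removezero}, and \ref{ax:infiniteoptionsets:mono} that ensure $\{0\}\notin\rejectset$ and that adjoining or deleting $0$ preserves membership in $\rejectset$---but beyond this the equivalences are essentially notational, as the axioms on $\choicefun$ have been deliberately crafted to mirror those on $\rejectset$.
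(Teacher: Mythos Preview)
Your proposal is correct and follows essentially the same route as the paper's proof: both establish the axiom-by-axiom correspondence between \ref{ax:choice:singleton}--\ref{ax:choice:mono} and \ref{ax:infiniteoptionsets:noempty}--\ref{ax:infiniteoptionsets:mono}, use \ref{ax:choice:translation} to derive the formula~\eqref{eq:fromKtoC}, and obtain uniqueness from the identification $\rejectset[\choicefun]=\rejectset$. The only differences are organisational (you treat necessity before sufficiency and place the computation $\rejectset[\choicefun]=\rejectset$ mid-proof rather than at the start of the `if' direction), not substantive.
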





Hence, proper choice functions correspond to proper sets of option sets $\rejectset$. On the other hand, we know from Proposition~\ref{prop:orderequivD} that proper orders correspond to proper sets of options $\desirset$. Relating proper choice functions with proper orders, therefore, amounts to relating proper sets of option sets $\rejectset$ with (sets of) proper sets of options $\desirset$. The first step consists in associating with any set of options $\desirset$ a set of option sets
\begin{equation}\label{eq:infinite:desirset:to:rejectset}
\rejectset[\desirset]
\coloneqq\cset{\optset\in\optsets}{\optset\cap\desirset\neq\emptyset}.
\end{equation}
It is quite straightforward to show that $\desirset$ is proper if and only if $\rejectset[\desirset]$ is. 

\begin{proposition}\label{prop:DcoherentiffKDis}
Consider a set of options $\desirset$ and its corresponding set of option sets $\rejectset[\desirset]$. Then $\desirset$ is proper if and only if $\rejectset[\desirset]$ is.
\end{proposition}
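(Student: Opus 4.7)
The plan is to verify, axiom by axiom, that each of the properties defining a proper set of option sets for $\rejectset[\desirset]$ corresponds to the analogous property for $\desirset$, and conversely. All information about $\desirset$ is encoded in which option sets meet $\desirset$, so both implications are largely bookkeeping; the only step that I expect to require any thought is the one corresponding to \ref{ax:infiniteoptionsets:cone}, where a selection argument is needed.

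For the \emph{only if} direction, I would assume $\desirset$ is proper and take the four axioms in turn. Property \ref{ax:infiniteoptionsets:noempty} follows immediately from $\emptyset\cap\desirset=\emptyset$; \ref{ax:infiniteoptionsets:removezero} follows from \ref{ax:desirs:nozero}, since $\optset\cap\desirset\subseteq\optset\setminus\set{0}$, so the latter set still meets $\desirset$; and \ref{ax:infiniteoptionsets:mono} is immediate from $\optset\cap\desirset\subseteq\altoptset\cap\desirset$ whenever $\optset\subseteq\altoptset$. For \ref{ax:infiniteoptionsets:cone}, I would fix a non-empty $\assessment\subseteq\rejectset[\desirset]$ together with an indexed family $(\lambda_\phi)_{\phi\in\Phi_\assessment}$ in $\Lambda_\assessment$, and pick, for each $\optset\in\assessment$, some $\phi^\ast(\optset)\in\optset\cap\desirset$. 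This defines a selector $\phi^\ast\in\Phi_\assessment$ all of whose values lie in $\desirset$, so $\sum_{\optset\in\assessment}\lambda_{\phi^\ast}(\optset)\phi^\ast(\optset)$ is a finite positive linear combination with at least one non-zero term and with all selected options in $\desirset$; by \ref{ax:desirs:lambda} followed by iterated \ref{ax:desirs:sum} this sum lies in $\desirset$, so the set appearing in \ref{ax:infiniteoptionsets:cone} meets $\desirset$ and belongs to $\rejectset[\desirset]$.

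For the \emph{if} direction, I would assume $\rejectset[\desirset]$ is proper and extract the three axioms of Definition~\ref{def:propdesir}. For \ref{ax:desirs:nozero}, if $0\in\desirset$ then $\set{0}\in\rejectset[\desirset]$, and \ref{ax:infiniteoptionsets:removezero} would yield $\emptyset\in\rejectset[\desirset]$, contradicting \ref{ax:infiniteoptionsets:noempty}. For \ref{ax:desirs:lambda}, given $\opt\in\desirset$ and $\lambda\in\posreals$, I would feed $\assessment=\set{\set{\opt}}\subseteq\rejectset[\desirset]$ into \ref{ax:infiniteoptionsets:cone} with $\lambda_\phi(\set{\opt})=\lambda$ for the unique $\phi\in\Phi_\assessment$, producing $\set{\lambda\opt}\in\rejectset[\desirset]$ and hence $\lambda\opt\in\desirset$. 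For \ref{ax:desirs:sum}, given $\opt,\altopt\in\desirset$ with $\opt\neq\altopt$, I would apply \ref{ax:infiniteoptionsets:cone} to $\assessment=\set{\set{\opt},\set{\altopt}}$ with unit weights, obtaining $\set{\opt+\altopt}\in\rejectset[\desirset]$ and thus $\opt+\altopt\in\desirset$; the degenerate case $\opt=\altopt$ reduces to \ref{ax:desirs:lambda} with $\lambda=2$. The one non-routine ingredient in the whole proof is the uniform choice of the selector $\phi^\ast$ in the only-if direction, which relies on the axiom of choice when $\assessment$ is infinite; everything else is direct set-theoretic manipulation.
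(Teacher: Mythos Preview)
Your proof is correct and essentially identical to the paper's: both verify the axioms one by one, use the same selector $\phi^\ast$ for \ref{ax:infiniteoptionsets:cone} in the forward direction, and use the same singleton/doubleton instantiations of $\assessment$ in the reverse direction. You are in fact slightly more careful than the paper in explicitly handling the degenerate case $\opt=\altopt$ for \ref{ax:desirs:sum}, which the paper glosses over.
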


What is perhaps more surprising though is that every proper set of option sets $\rejectset$ corresponds to a set $\setofdesirsets$ of proper sets of options $\desirset$.

\begin{theorem}\label{theo:infinite:coherentrepresentation:twosided}
A set of option sets $\rejectset$ is proper if and only if there is a non-empty set $\setofdesirsets\subseteq\desirsets$ of proper sets of options such that $\rejectset=\bigcap\cset{\rejectset[\desirset]}{\desirset\in\setofdesirsets}$.
The largest such set $\setofdesirsets$ is then $\desirsets(\rejectset)\coloneqq\cset{\desirset\in\desirsets}{\rejectset\subseteq\rejectset[\desirset]}$.
\end{theorem}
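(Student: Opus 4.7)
The plan is to prove the two implications of the biconditional separately; the converse is the substantive direction. The forward direction is immediate from Proposition~\ref{prop:DcoherentiffKDis}: each $\rejectset[\desirset]$ is then proper, and it is straightforward to verify that arbitrary non-empty intersections of proper sets of option sets are proper. I would check~\ref{ax:infiniteoptionsets:noempty}--\ref{ax:infiniteoptionsets:mono} in turn: both~\ref{ax:infiniteoptionsets:noempty} and~\ref{ax:infiniteoptionsets:mono} pass trivially through the intersection, while for~\ref{ax:infiniteoptionsets:removezero} and~\ref{ax:infiniteoptionsets:cone} the hypothesis of the axiom in the intersection implies the hypothesis in every $\rejectset[\desirset]$, hence the conclusion in each, and hence in the intersection.

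For the converse, my candidate is $\setofdesirsets\coloneqq\desirsets(\rejectset)$. The inclusion $\rejectset\subseteq\bigcap_{\desirset\in\desirsets(\rejectset)}\rejectset[\desirset]$ is immediate from the definition, and maximality is automatic: any family $\setofdesirsets$ with $\rejectset=\bigcap_{\desirset\in\setofdesirsets}\rejectset[\desirset]$ satisfies $\rejectset\subseteq\rejectset[\desirset]$ for each $\desirset\in\setofdesirsets$, hence $\setofdesirsets\subseteq\desirsets(\rejectset)$. The degenerate case $\rejectset=\emptyset$ is trivial (take $\desirset=\emptyset$, so $\rejectset[\emptyset]=\emptyset$), so the whole theorem reduces to proving, for $\rejectset\neq\emptyset$, the following key claim: whenever $\optset\in\optsets$ is such that $\optset\notin\rejectset$, there is some $\desirset\in\desirsets(\rejectset)$ with $\optset\cap\desirset=\emptyset$.

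I would prove this key claim by contraposition. Suppose no selection map $\phi\colon\rejectset\to\opts$ with $\phi(B)\in B$ for all $B\in\rejectset$ yields a positive conic hull $\posi(\phi(\rejectset))$ that is both blunt and disjoint from $\optset$. Then for every such $\phi$, the axiom of choice allows picking finitely supported non-negative weights $\lambda_\phi\in\Lambda_\rejectset$ witnessing the obstruction, so that $\sum_{B\in\rejectset}\lambda_\phi(B)\phi(B)\in\optset\cup\set{0}$. Axiom~\ref{ax:infiniteoptionsets:cone} applied with $\assessment=\rejectset$ then yields $T\coloneqq\bigl\{\sum_{B\in\rejectset}\lambda_\phi(B)\phi(B)\colon\phi\in\Phi_\rejectset\bigr\}\in\rejectset$, and by construction $T\subseteq\optset\cup\set{0}$.

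The remaining axioms close the argument:~\ref{ax:infiniteoptionsets:removezero} gives $T\setminus\set{0}\in\rejectset$; if $T\subseteq\set{0}$ then $T\setminus\set{0}=\emptyset$ contradicts~\ref{ax:infiniteoptionsets:noempty}, while otherwise $T\setminus\set{0}$ is a non-empty subset of $\optset$, so~\ref{ax:infiniteoptionsets:mono} forces $\optset\in\rejectset$, also a contradiction. Hence some admissible $\phi$ exists, and $\desirset\coloneqq\posi(\phi(\rejectset))$ is then the desired element of $\desirsets(\rejectset)$: it is a blunt convex cone (so a proper set of options), it is disjoint from $\optset$ by choice of $\phi$, and it contains $\phi(B)\in B$ for every $B\in\rejectset$ so that $\rejectset\subseteq\rejectset[\desirset]$. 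The main subtlety is the infinitary application of~\ref{ax:infiniteoptionsets:cone} with the assessment taken to be all of $\rejectset$ and with $\phi$-dependent weights: it is precisely this that converts the absence of any uniformly admissible $\phi$ into a single option set in $\rejectset$ trapped inside $\optset\cup\set{0}$, from which the three remaining axioms extract the desired contradiction.
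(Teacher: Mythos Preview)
Your proof is correct and follows essentially the same route as the paper: both handle the `if' direction via Proposition~\ref{prop:DcoherentiffKDis} and closure under intersection, and for the `only if' direction both form $\posi(\phi(\rejectset))$ for each selection $\phi\in\Phi_{\rejectset}$, argue that under the contradiction hypothesis every such cone contributes a finite positive combination lying in $\optset\cup\{0\}$, and then invoke \ref{ax:infiniteoptionsets:cone} together with \ref{ax:infiniteoptionsets:removezero} and \ref{ax:infiniteoptionsets:mono} to force $\optset\in\rejectset$. The only differences are cosmetic---you apply \ref{ax:infiniteoptionsets:removezero} before \ref{ax:infiniteoptionsets:mono} rather than after, and you leave the non-emptiness of $\desirsets(\rejectset)$ implicit, whereas the paper secures it explicitly by instantiating the key claim at $\optset=\emptyset$.
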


To additionally guarantee that the option sets in $\setofdesirsets$ are $\mathcal{R}$-compatible with a given set of rules $\mathcal{R}$, we introduce a notion of $\mathcal{R}$-compatibility for proper sets of option sets and choice functions.

\begin{definition}\label{def:abstractaxiom:K}
Let $\mathcal{R}$ be a set of rules. For any proper set of option sets $\rejectset$, we then say that $\rejectset$ is $\mathcal{R}$-compatible if for all $(\mathcal{A},B)\in\mathcal{R}$:
\begin{enumerate}[label=$\mathrm{PK}_{\mathrm{\mathcal{R}}}$.,ref=$\mathrm{PK}_{\mathrm{\mathcal{R}}}$,leftmargin=*]
\item\label{ax:infiniteoptionsets:abstractaxiom} if $A\in\rejectset$ for all $\optset\in\mathcal{A}$, then also $B\in\rejectset$.
\end{enumerate}
A proper choice function $\choicefun$ is called $\mathcal{R}$-compatible if $\rejectset[\choicefun]$ is $\mathcal{R}$-compatible.
\end{definition}

Clearly, a proper set of options $\desirset$ is $\mathcal{R}$-compatible if and only if $\rejectset[\desirset]$ is. What is far less obvious though, is that for sets of rules $\mathcal{R}$ that are \emph{monotone} a proper set of option sets $\rejectset$ is $\mathcal{R}$-compatible if and only if it corresponds to a set $\setofdesirsets$ of proper $\mathcal{R}$-compatible sets of options $\desirset$.

\begin{definition}\label{def:monotonesetofrules}
A set of rules $\mathcal{R}$ is monotone if for all $M\in\optsets$:
\begin{equation*}
(\mathcal{A},B)\in\mathcal{R}~\text{and}~\mathcal{A}\neq\emptyset
~\then~
(\{A\cup M\colon A\in\mathcal{A}\},B\cup M)\in\mathcal{R}.
\end{equation*}
\end{definition}

\begin{theorem}\label{theo:infinite:abstractrepresentation:twosided}
Let $\mathcal{R}$ be a monotone set of rules.
A set of option sets $\rejectset$ is then proper and $\mathcal{R}$-compatible if and only if there is a non-empty set $\setofdesirsets\subseteq\desirsets_{\mathcal{R}}
$ of proper $\mathcal{R}$-compatible sets of options such that $\rejectset=\bigcap\cset{\rejectset[\desirset]}{\desirset\in\setofdesirsets}$. 
The largest such set $\setofdesirsets$ is then $\desirsets_{\mathcal{R}}\group{\rejectset}\coloneqq\cset{\desirset\in\desirsets_{\mathcal{R}}}{\rejectset\subseteq\rejectset[\desirset]}$.
\end{theorem}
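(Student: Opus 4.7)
The plan is to build on the representation Theorem~\ref{theo:infinite:coherentrepresentation:twosided} and to carry the additional $\mathcal{R}$-compatibility through a Zorn-style extension argument in which monotonicity of $\mathcal{R}$ plays the central role.

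For the ``if'' direction, assume $\rejectset=\bigcap\cset{\rejectset[\desirset]}{\desirset\in\setofdesirsets}$ with $\emptyset\neq\setofdesirsets\subseteq\desirsets_{\mathcal{R}}$. Proposition~\ref{prop:DcoherentiffKDis} gives the properness of each $\rejectset[\desirset]$, while for any $(\mathcal{A},B)\in\mathcal{R}$ the condition~\ref{ax:desirsets:abstractaxiom} on $\desirset$ translates, via~\eqref{eq:infinite:desirset:to:rejectset}, directly into~\ref{ax:infiniteoptionsets:abstractaxiom} on $\rejectset[\desirset]$. Each of the axioms~\ref{ax:infiniteoptionsets:noempty}--\ref{ax:infiniteoptionsets:mono} together with~\ref{ax:infiniteoptionsets:abstractaxiom} is preserved by arbitrary intersection, so $\rejectset$ is proper and $\mathcal{R}$-compatible.

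For the ``only if'' direction, assume $\rejectset$ is proper and $\mathcal{R}$-compatible. The inclusion $\rejectset\subseteq\bigcap\cset{\rejectset[\desirset]}{\desirset\in\desirsets_{\mathcal{R}}(\rejectset)}$ is immediate, and the ``largest'' claim follows because for any valid $\setofdesirsets$ realising $\rejectset$ as such an intersection, each $\desirset\in\setofdesirsets$ automatically satisfies $\rejectset\subseteq\rejectset[\desirset]$ and hence lies in $\desirsets_{\mathcal{R}}(\rejectset)$. For the reverse inclusion I fix $\optset[0]\notin\rejectset$ and must produce an $\mathcal{R}$-compatible $\desirset\in\desirsets_{\mathcal{R}}(\rejectset)$ with $\optset[0]\cap\desirset=\emptyset$. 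Theorem~\ref{theo:infinite:coherentrepresentation:twosided} supplies some proper $\desirset[0]$ with $\rejectset\subseteq\rejectset[{\desirset[0]}]$ and $\optset[0]\cap\desirset[0]=\emptyset$; I then apply Zorn's lemma to
\[
\Sigma\coloneqq\cset{\desirset\in\desirsets}{\desirset[0]\subseteq\desirset,\ \rejectset\subseteq\rejectset[\desirset],\ \optset[0]\cap\desirset=\emptyset},
\]
whose chains have upper bounds in $\Sigma$ (the union of a chain of blunt convex cones is again a blunt convex cone, and both side conditions are preserved under unions), to obtain a maximal element $\desirset[\ast]\in\Sigma$.

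The main obstacle is then to show that $\desirset[\ast]$ is $\mathcal{R}$-compatible; this is where monotonicity of $\mathcal{R}$ is essential. Suppose for contradiction that $(\mathcal{A},B)\in\mathcal{R}$ has $A\cap\desirset[\ast]\neq\emptyset$ for all $A\in\mathcal{A}$ but $B\cap\desirset[\ast]=\emptyset$. If $\mathcal{A}=\emptyset$, the premise is vacuous, so the $\mathcal{R}$-compatibility of $\rejectset$ directly yields $B\in\rejectset\subseteq\rejectset[{\desirset[\ast]}]$, contradicting $B\cap\desirset[\ast]=\emptyset$. If $\mathcal{A}\neq\emptyset$, maximality of $\desirset[\ast]$ implies that every $b\in B$ is ``blocked'': either $b=0$, $-b\in\desirset[\ast]$, or there exist $\lambda_b>0$ and $d_b\in\desirset[\ast]\cup\set{0}$ with $\lambda_b b+d_b\in\optset[0]$, for otherwise the blunt convex cone $\posi(\desirset[\ast]\cup\set{b})$ would be a strictly larger element of $\Sigma$. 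Invoking monotonicity of $\mathcal{R}$ with $M=\optset[0]$ produces the inflated rule $(\cset{A\cup\optset[0]}{A\in\mathcal{A}},B\cup\optset[0])\in\mathcal{R}$, and combining the selectors from $A\cap\desirset[\ast]$ with the $\optset[0]$-witnesses for the $b\in B$ through the cone axiom~\ref{ax:infiniteoptionsets:cone} of $\rejectset$, together with the $\mathcal{R}$-compatibility of $\rejectset$ applied to this inflated rule, will produce a set in $\rejectset$ every element of which is forced to lie outside $\desirset[\ast]$, contradicting $\rejectset\subseteq\rejectset[{\desirset[\ast]}]$. Carefully routing the witnesses through~\ref{ax:infiniteoptionsets:cone} so that the final set is simultaneously in $\rejectset$ and disjoint from $\desirset[\ast]$ is the delicate combinatorial step.
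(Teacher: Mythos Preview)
Your Zorn argument runs in the opposite direction from the paper's, and this is where the trouble lies. The paper takes the poset $\cset{\desirset\in\desirsets}{\rejectset\subseteq\rejectset[\desirset]\subseteq\rejectset[\tilde{\desirset}]}$ ordered by \emph{reverse} inclusion, obtains a $\subseteq$-\emph{minimal} element $\desirset[\optset]$, and then sets $M\coloneqq\opts\setminus\desirset[\optset]$. Minimality is exactly what makes the monotonicity step go through: for every $\desirset$ in the Theorem~\ref{theo:infinite:coherentrepresentation:twosided} representation of $\rejectset$, either $\desirset=\desirset[\optset]$ or $\desirset\cap M\neq\emptyset$ (otherwise $\desirset\subsetneq\desirset[\optset]$ would contradict minimality). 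In both cases $(A^*\cup M)\cap\desirset\neq\emptyset$, so $A^*\cup M\in\rejectset$ for every $A^*\in\mathcal{A}^*$, and the $\mathcal{R}$-compatibility of $\rejectset$ then forces $B^*\cup M\in\rejectset\subseteq\rejectset[{\desirset[\optset]}]$, a contradiction.

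Your maximal $\desirset[\ast]$ gives no such leverage. To apply the $\mathcal{R}$-compatibility of $\rejectset$ to the inflated rule $(\cset{A\cup\optset[0]}{A\in\mathcal{A}},B\cup\optset[0])$ you would need $A\cup\optset[0]\in\rejectset$ for every $A\in\mathcal{A}$, but all you know is $A\cap\desirset[\ast]\neq\emptyset$, i.e.\ $A\in\rejectset[{\desirset[\ast]}]$, which is in general strictly larger than $\rejectset$. Replacing $\optset[0]$ by $\opts\setminus\desirset[\ast]$ does not help either: representing sets $\desirset\subseteq\desirset[\ast]$ need not meet $A$. Your ``blocked'' analysis of the elements of $B$ is correct, but it only produces witnesses in $\desirset[\ast]$ and in $\optset[0]$, neither of which lets you manufacture option sets that lie in $\rejectset$; the cone axiom~\ref{ax:infiniteoptionsets:cone} also requires its inputs to already be in $\rejectset$. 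So the ``delicate combinatorial step'' you flag is not merely delicate---with a maximal $\desirset[\ast]$ and $M=\optset[0]$ there is no route from the hypotheses to a set in $\rejectset$ disjoint from $\desirset[\ast]$. The fix is to reverse the Zorn direction and take $M$ to be the complement of the minimal element, as the paper does.
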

In practice, however, most sets of rules that one would like to impose on the representing sets of options are not monotone. From all the sets of rules that we considered in Section 3, for example, the only monotone ones are $\mathcal{R}_{\mathrm{C}}$ and $\mathcal{R}_{\mathrm{T}}$; trivially, in fact, since $\assessment=\emptyset$ for the rules in these sets.

Fortunately, a characterisation similar to Theorem~\ref{theo:infinite:abstractrepresentation:twosided} can also be obtained for sets of rules $\mathcal{R}$ that are not monotone. It suffices to replace $\mathcal{R}$ with its monotonification
\begin{equation*}
\mathrm{mon}(\mathcal{R})
\coloneqq
\mathcal{R}\cup
\big\{
(\{\optset\cup M\colon\optset\in\mathcal{A}\},\altoptset\cup M)\colon(\mathcal{A},\altoptset)\in\mathcal{R},\mathcal{A}\neq\emptyset,M\in\optsets
\big\},
\end{equation*}which is the smallest monotone set of rules that includes $\mathcal{R}$. This still leads to a representation in terms of proper $\mathcal{R}$-compatible sets of options because $\mathcal{R}$- and $\mathrm{mon}(\mathcal{R})$-compatibility are equivalent for sets of options. 


\begin{proposition}
\label{prop:monotonificationmakesnodifference}
Consider a set of rules $\mathcal{R}$ and a proper set of options $\desirset$. Then $\desirset$ is $\mathcal{R}$-compatible if and only if it is $\mathrm{mon}(\mathcal{R})$-compatible.
\end{proposition}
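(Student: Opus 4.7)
The plan is to prove the two directions separately, noting that one is immediate and the other reduces to a short case split.

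For the easy direction, since $\mathcal{R}\subseteq\mathrm{mon}(\mathcal{R})$ by the very definition of the monotonification, any set of options that satisfies \ref{ax:desirsets:abstractaxiom} for every rule in $\mathrm{mon}(\mathcal{R})$ automatically satisfies it for every rule in $\mathcal{R}$. So $\mathrm{mon}(\mathcal{R})$-compatibility trivially implies $\mathcal{R}$-compatibility.

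For the converse, assume $\desirset$ is $\mathcal{R}$-compatible. A generic rule in $\mathrm{mon}(\mathcal{R})\setminus\mathcal{R}$ has the form $(\{A\cup M\colon A\in\mathcal{A}\},B\cup M)$ where $(\mathcal{A},B)\in\mathcal{R}$, $\mathcal{A}\neq\emptyset$ and $M\in\optsets$. Fix such a rule and suppose that $(A\cup M)\cap\desirset\neq\emptyset$ for every $A\in\mathcal{A}$; we have to show that $(B\cup M)\cap\desirset\neq\emptyset$. Here I would do a case distinction on whether $M$ itself intersects $\desirset$. If $M\cap\desirset\neq\emptyset$, then $(B\cup M)\cap\desirset\neq\emptyset$ is immediate. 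Otherwise $M\cap\desirset=\emptyset$, so that $(A\cup M)\cap\desirset=A\cap\desirset$ for each $A\in\mathcal{A}$; the assumption then yields $A\cap\desirset\neq\emptyset$ for all $A\in\mathcal{A}$, and $\mathcal{R}$-compatibility applied to $(\mathcal{A},B)\in\mathcal{R}$ gives $B\cap\desirset\neq\emptyset$, hence $(B\cup M)\cap\desirset\neq\emptyset$.

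Since this argument works for an arbitrary rule in $\mathrm{mon}(\mathcal{R})\setminus\mathcal{R}$, and $\desirset$ already satisfies \ref{ax:desirsets:abstractaxiom} for every rule in $\mathcal{R}$ by hypothesis, $\desirset$ is $\mathrm{mon}(\mathcal{R})$-compatible. There is no real obstacle here; the only subtlety is remembering that the monotonification requires $\mathcal{A}\neq\emptyset$, which is exactly what ensures the case split above is exhaustive and that no degenerate rule with empty premise has to be treated separately.
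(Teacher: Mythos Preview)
Your proof is correct and follows essentially the same approach as the paper's: the trivial direction via $\mathcal{R}\subseteq\mathrm{mon}(\mathcal{R})$, and for the converse a case split on whether $M\cap\desirset$ is empty, reducing the nontrivial case to $\mathcal{R}$-compatibility of the original rule $(\mathcal{A},B)$. The only cosmetic difference is that the paper treats an arbitrary rule in $\mathrm{mon}(\mathcal{R})$ and first disposes of the case where it already lies in $\mathcal{R}$, whereas you restrict attention from the outset to rules in $\mathrm{mon}(\mathcal{R})\setminus\mathcal{R}$.
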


\begin{theorem}\label{theo:infinite:abstractrepresentation:twosided:monotonification}
Let $\mathcal{R}$ be a set of rules.
A set of option sets $\rejectset$ is then proper and $\mathrm{mon}(\mathcal{R})$-compatible if and only if there is a non-empty set $\setofdesirsets\subseteq\desirsets_{\mathcal{R}}
$ of proper $\mathcal{R}$-compatible sets of options such that $\rejectset=\bigcap\cset{\rejectset[\desirset]}{\desirset\in\setofdesirsets}$. 
The largest such set $\setofdesirsets$ is then $\desirsets_{\mathcal{R}}\group{\rejectset}\coloneqq\cset{\desirset\in\desirsets_{\mathcal{R}}}{\rejectset\subseteq\rejectset[\desirset]}$.
\end{theorem}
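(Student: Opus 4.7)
The plan is to derive this theorem as a direct corollary of Theorem~\ref{theo:infinite:abstractrepresentation:twosided} applied to the monotonification $\mathrm{mon}(\mathcal{R})$, combined with Proposition~\ref{prop:monotonificationmakesnodifference} which shows that $\mathcal{R}$- and $\mathrm{mon}(\mathcal{R})$-compatibility coincide on the level of proper sets of options. Observe first that $\mathrm{mon}(\mathcal{R})$ is monotone by construction: any rule $(\{A\cup M\colon A\in\mathcal{A}\},B\cup M)$ obtained from $(\mathcal{A},B)\in\mathcal{R}$ with $\mathcal{A}\neq\emptyset$ is already in $\mathrm{mon}(\mathcal{R})$, and further applying the monotonification operation to such a rule with some $M'\in\optsets$ yields the rule corresponding to $M\cup M'$, which is again in $\mathrm{mon}(\mathcal{R})$. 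Rules in $\mathcal{R}$ with $\mathcal{A}=\emptyset$ remain untouched. So Theorem~\ref{theo:infinite:abstractrepresentation:twosided} applies to $\mathrm{mon}(\mathcal{R})$.

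Next I would invoke Theorem~\ref{theo:infinite:abstractrepresentation:twosided} with $\mathrm{mon}(\mathcal{R})$ in the role of $\mathcal{R}$: a set of option sets $\rejectset$ is proper and $\mathrm{mon}(\mathcal{R})$-compatible if and only if there is a non-empty $\setofdesirsets\subseteq\desirsets_{\mathrm{mon}(\mathcal{R})}$ with $\rejectset=\bigcap\cset{\rejectset[\desirset]}{\desirset\in\setofdesirsets}$, and the largest such $\setofdesirsets$ is $\desirsets_{\mathrm{mon}(\mathcal{R})}(\rejectset)=\cset{\desirset\in\desirsets_{\mathrm{mon}(\mathcal{R})}}{\rejectset\subseteq\rejectset[\desirset]}$.

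Now I translate this representation from $\mathrm{mon}(\mathcal{R})$-compatibility to $\mathcal{R}$-compatibility on the representing sets of options. By Proposition~\ref{prop:monotonificationmakesnodifference}, a proper set of options $\desirset$ is $\mathrm{mon}(\mathcal{R})$-compatible if and only if it is $\mathcal{R}$-compatible, hence $\desirsets_{\mathrm{mon}(\mathcal{R})}=\desirsets_{\mathcal{R}}$ as subsets of $\desirsets$. Consequently $\desirsets_{\mathrm{mon}(\mathcal{R})}(\rejectset)=\desirsets_{\mathcal{R}}(\rejectset)$, and the two equivalent characterisations merge into the one stated in the theorem. This yields both the equivalence and the identification of the largest such $\setofdesirsets$.

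I do not foresee any serious obstacle; the only thing to double-check is the verification that $\mathrm{mon}(\mathcal{R})$ is indeed monotone (so that Theorem~\ref{theo:infinite:abstractrepresentation:twosided} is applicable), and that the passage from $\desirsets_{\mathrm{mon}(\mathcal{R})}$ to $\desirsets_{\mathcal{R}}$ really is an equality of sets---both of which follow straight from the definition of $\mathrm{mon}(\mathcal{R})$ and from Proposition~\ref{prop:monotonificationmakesnodifference}, respectively. The substantive content of the theorem is already packaged in the two earlier results; the present theorem merely combines them.
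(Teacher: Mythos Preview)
Your proposal is correct and follows essentially the same route as the paper: apply Theorem~\ref{theo:infinite:abstractrepresentation:twosided} to the monotone set of rules $\mathrm{mon}(\mathcal{R})$, then use Proposition~\ref{prop:monotonificationmakesnodifference} to identify $\desirsets_{\mathrm{mon}(\mathcal{R})}$ with $\desirsets_{\mathcal{R}}$ (and hence $\desirsets_{\mathrm{mon}(\mathcal{R})}(\rejectset)$ with $\desirsets_{\mathcal{R}}(\rejectset)$). The paper simply asserts that $\mathrm{mon}(\mathcal{R})$ is ``clearly'' monotone where you spell this out, but otherwise the arguments coincide.
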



Putting the pieces together---combining Theorem~\ref{theo:infinite:abstractrepresentation:twosided:monotonification} with Propositions~\ref{prop:orderequivD} and~\ref{prop:CequivK}---we arrive at our main result: an axiomatic characterisation for choice functions of the form $\choicefun[\mathcal{O}]$, with $\mathcal{O}$ a set of $\mathcal{R}$-compatible proper orders.

\begin{theorem}\label{theo:properrepresentation}
Let $\mathcal{R}$ be a set of rules. 
Then a choice function $\choicefun$ is proper and $\mathrm{mon}(\mathcal{R})$-compatible if and only if there is a non-empty set $\mathcal{O}\subseteq\mathbf{O}_{\mathcal{R}}
$ of $\mathcal{R}$-compatible proper orders such that $\choicefun=\choicefun[\mathcal{O}]$. 
The largest such set is then
\begin{equation*}
\mathbf{O}_{\mathcal{R}}(\choicefun)\coloneqq\{\succ\,\in\mathbf{O}_{\mathcal{R}}\colon\choicefun[\succ](\optset)\subseteq\choicefun(\optset)\text{ for all $\optset\in\optsets$}\}.
\end{equation*}
\end{theorem}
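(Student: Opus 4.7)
The plan is to chain the three representation theorems already available: Proposition~\ref{prop:CequivK} (choice functions $\leftrightarrow$ sets of option sets), Theorem~\ref{theo:infinite:abstractrepresentation:twosided:monotonification} (proper $\mathrm{mon}(\mathcal{R})$-compatible sets of option sets $\leftrightarrow$ non-empty families of proper $\mathcal{R}$-compatible sets of options), and Proposition~\ref{prop:orderequivD} (proper sets of options $\leftrightarrow$ proper orders). Routine bookkeeping with the definitions ensures that the compatibility axioms survive each step: $\mathrm{mon}(\mathcal{R})$-compatibility of $\choicefun$ is by Definition~\ref{def:abstractaxiom:K} exactly that of $\rejectset[\choicefun]$, and membership of a proper order $\succ$ in $\mathbf{O}_{\mathcal{R}}$ is by Definition~\ref{def:abstractaxiom:D} exactly $\mathcal{R}$-compatibility of $\desirset[\succ]$.

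The glue that makes the chain collapse to $\choicefun=\choicefun[\mathcal{O}]$ is the operational identity that, for any non-empty $\mathcal{O}\subseteq\mathbf{O}_{\mathcal{R}}$ with $\setofdesirsets\coloneqq\{\desirset[\succ]\colon\succ\,\in\mathcal{O}\}$,
\begin{equation*}
\opt\in\choicefun[\mathcal{O}](\optset)\ifandonlyif \optset-\opt\notin\bigcap\cset{\rejectset[\desirset]}{\desirset\in\setofdesirsets}\quad\text{for all }\optset\in\optsets,\,\opt\in\optset.
\end{equation*}
I would prove this by a direct unfolding: $\opt\in\choicefun[\mathcal{O}](\optset)$ iff some $\succ\,\in\mathcal{O}$ has no $\altopt\in\optset$ with $\altopt\succ\opt$, iff $(\optset-\opt)\cap\desirset[\succ]=\emptyset$ for some $\succ$ (using $0\notin\desirset[\succ]$ from~\ref{ax:desirs:nozero} to dispatch the term $\altopt=\opt$), iff $\optset-\opt\notin\rejectset[\desirset[\succ]]$ for some $\succ$, as claimed. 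Combined with the formula~\eqref{eq:fromKtoC}, this identifies the proper choice function represented by the intersection $\bigcap_{\desirset\in\setofdesirsets}\rejectset[\desirset]$ as $\choicefun[\mathcal{O}]$. Both directions of the theorem then follow by traversing the chain: from $\choicefun$ to $\mathcal{O}$, extract $\rejectset[\choicefun]$ via Proposition~\ref{prop:CequivK}, decompose it via Theorem~\ref{theo:infinite:abstractrepresentation:twosided:monotonification}, and lift to orders via Proposition~\ref{prop:orderequivD}; from $\mathcal{O}$ to $\choicefun$, build $\setofdesirsets$, form the intersection (proper and $\mathrm{mon}(\mathcal{R})$-compatible by Theorem~\ref{theo:infinite:abstractrepresentation:twosided:monotonification}), and read off $\choicefun=\choicefun[\mathcal{O}]$.

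For the maximality claim, Theorem~\ref{theo:infinite:abstractrepresentation:twosided:monotonification} already identifies the largest admissible family as $\desirsets_{\mathcal{R}}(\rejectset[\choicefun])=\{\desirset\in\desirsets_{\mathcal{R}}\colon\rejectset[\choicefun]\subseteq\rejectset[\desirset]\}$, so by Proposition~\ref{prop:orderequivD} it suffices to verify, for every $\succ\,\in\mathbf{O}_{\mathcal{R}}$,
\begin{equation*}
\rejectset[\choicefun]\subseteq\rejectset[\desirset[\succ]]\ifandonlyif\bigl(\choicefun[\succ](\optset)\subseteq\choicefun(\optset)\text{ for all }\optset\in\optsets\bigr).
\end{equation*}
The ``$\Rightarrow$'' direction is immediate from~\eqref{eq:fromKtoC}. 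The ``$\Leftarrow$'' direction is the only genuinely delicate step and I expect it to be the main obstacle, because the right-hand side only constrains option sets of the form $\optset-\opt$, which always contain $0$, whereas $\rejectset[\choicefun]$ may also contain option sets not containing $0$. I plan to bridge this gap with the following trick: given $B\in\rejectset[\choicefun]$ with $0\notin B$, monotonicity~\ref{ax:infiniteoptionsets:mono} yields $B\cup\{0\}\in\rejectset[\choicefun]$, and since $B\cup\{0\}=(B\cup\{0\})-0$ is of the tested form, the hypothesis forces $B\cup\{0\}\in\rejectset[\desirset[\succ]]$; because $0\notin\desirset[\succ]$ by~\ref{ax:desirs:nozero}, the intersection $(B\cup\{0\})\cap\desirset[\succ]$ lives entirely inside $B$, so $B\in\rejectset[\desirset[\succ]]$ as desired.
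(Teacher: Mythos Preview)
Your argument for the main equivalence is correct and essentially identical to the paper's: both chain Proposition~\ref{prop:CequivK}, Theorem~\ref{theo:infinite:abstractrepresentation:twosided:monotonification} and Proposition~\ref{prop:orderequivD}, and both rely on the operational identity $\opt\in\choicefun[\mathcal{O}](\optset)\ifandonlyif\optset-\opt\notin\bigcap_{\desirset\in\setofdesirsets}\rejectset[\desirset]$ as the glue.

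For the maximality claim, however, you take a genuinely different route. You transport the maximality statement of Theorem~\ref{theo:infinite:abstractrepresentation:twosided:monotonification} through the bijection $\succ\leftrightarrow\desirset[\succ]$, which forces you to verify the equivalence $\rejectset[\choicefun]\subseteq\rejectset[{\desirset[\succ]}]\ifandonlyif\choicefun[\succ]\subseteq\choicefun$; your handling of the delicate ``$\Leftarrow$'' direction via $B\cup\{0\}$ and $0\notin\desirset[\succ]$ is correct. The paper instead argues directly at the level of choice functions with a short sandwich: for any representing $\mathcal{O}$ one has $\choicefun[\succ](\optset)\subseteq\choicefun[\mathcal{O}](\optset)=\choicefun(\optset)$ straight from Equation~\eqref{eq:Cfromorders}, so $\mathcal{O}\subseteq\mathbf{O}_{\mathcal{R}}(\choicefun)$, and then $\choicefun=\choicefun[\mathcal{O}]\subseteq\choicefun[{\mathbf{O}_{\mathcal{R}}(\choicefun)}]=\bigcup_{\succ\in\mathbf{O}_{\mathcal{R}}(\choicefun)}\choicefun[\succ]\subseteq\choicefun$. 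The paper's approach is shorter and avoids the technical bridge you flagged as the main obstacle; your approach has the merit of reusing the maximality already established in Theorem~\ref{theo:infinite:abstractrepresentation:twosided:monotonification} rather than reproving it from scratch.
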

A similar result holds without $\mathcal{R}$-compatibility as well. It corresponds to the special case $\mathcal{R}=\mathrm{mon}(\mathcal{R})=\emptyset$, for which $\mathcal{R}$- and $\mathrm{mon}(\mathcal{R})$-compatibility are trivially satisfied.

In order to obtain an axiomatic characterisation for choice functions of the form $\choicefun[\succ]$, with $\succ$ a single $\mathcal{R}$-compatible proper order, we additionally impose that $\choicefun$ is completely determined by its pairwise choices. Replacing $\mathcal{R}$ by $\mathrm{mon}(\mathcal{R})$ is not needed in this case.

\begin{definition}
A choice function $\choicefun$ is \emph{binary} if for all $\optset\in\optsets$ and $\opt\in\optset$:
\begin{equation*}
\opt\in\choicefun(\optset)
\Leftrightarrow
(\forall\altopt\in\optset\setminus\{\opt\})~\opt\in\choicefun(\{\opt,\altopt\})
\end{equation*}
\end{definition}

\begin{theorem}\label{theo:properrepresentationone}
Let $\mathcal{R}$ be a set of rules. Then a choice function $\choicefun$ is proper, binary and $\mathcal{R}$-compatible if and only if there is an $\mathcal{R}$-compatible proper order $\succ$ such that $\choicefun=\choicefun[\succ]$. This order is unique and characterised by
\begin{equation}\label{eq:properrepresentationone}
\altopt\succ\opt
\Leftrightarrow
\opt\notin\choicefun(\{\opt,\altopt\})
\text{ for all $\opt,\altopt\in\opts$.}
\end{equation}
and it furthermore satisfies
$\rejectset[\choicefun]=\rejectset[{\desirset[\succ]}]$.
\end{theorem}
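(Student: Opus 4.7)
The plan is to handle sufficiency and necessity separately, with the heart of the argument being an equivalence $\optset \in \rejectset[\choicefun] \Leftrightarrow \optset \cap \desirset \neq \emptyset$ whose nontrivial direction uses binarity.

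Sufficiency is essentially routine. Given an $\mathcal{R}$-compatible proper order $\succ$, I will apply Theorem~\ref{theo:properrepresentation} with the singleton $\mathcal{O} = \{\succ\}$ to conclude that $\choicefun[\succ] = \choicefun[\mathcal{O}]$ is proper and $\mathrm{mon}(\mathcal{R})$-compatible, hence a fortiori $\mathcal{R}$-compatible since $\mathcal{R} \subseteq \mathrm{mon}(\mathcal{R})$. Binarity is immediate from~\eqref{eq:Cfromorder}, since ``no $\altopt \in \optset$ satisfies $\altopt \succ \opt$'' is obviously equivalent to the pairwise version ``$\opt \in \choicefun[\succ](\{\opt,\altopt\})$ for every $\altopt \in \optset \setminus \{\opt\}$''.

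For necessity, suppose $\choicefun$ is proper, binary and $\mathcal{R}$-compatible. Introduce the set of options $\desirset \coloneqq \{\opt \in \opts : \{\opt\} \in \rejectset[\choicefun]\}$ and establish the key equivalence
\[
\optset \in \rejectset[\choicefun] \iff \optset \cap \desirset \neq \emptyset
\qquad \text{for every } \optset \in \optsets.
\]
The ``$\Leftarrow$'' direction is a one-line application of the monotonicity axiom~\ref{ax:infiniteoptionsets:mono} (available via Proposition~\ref{prop:CequivK}). For ``$\Rightarrow$'', unfold $\optset \in \rejectset[\choicefun]$ as $0 \notin \choicefun(\optset \cup \{0\})$ and apply binarity to produce some $\altopt \in \optset \setminus \{0\}$ with $0 \notin \choicefun(\{0, \altopt\})$; this $\altopt$ then lies in $\desirset \cap \optset$.

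From this identity $\rejectset[\choicefun] = \rejectset[\desirset]$, the $\mathcal{R}$-compatibility of $\rejectset[\choicefun]$ transfers verbatim to $\desirset$, and the properness axioms~\ref{ax:desirs:nozero}--\ref{ax:desirs:lambda} for $\desirset$ drop out from the properness axioms of $\rejectset[\choicefun]$ applied to the singleton assessment $\{\{\opt\}\}$ and the pair assessment $\{\{\opt\}, \{\altopt\}\}$. Proposition~\ref{prop:orderequivD} then produces a unique $\mathcal{R}$-compatible proper order $\succ$ with $\desirset[\succ] = \desirset$, which combined with the previous identity gives $\rejectset[\choicefun] = \rejectset[{\desirset[\succ]}] = \rejectset[{\choicefun[\succ]}]$, and hence $\choicefun = \choicefun[\succ]$ by the uniqueness part of Proposition~\ref{prop:CequivK}. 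Equation~\eqref{eq:properrepresentationone} is just $\altopt - \opt \in \desirset$ translated through~\ref{ax:choice:translation}, and uniqueness of $\succ$ follows immediately from this pairwise characterisation.

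The main obstacle---and the precise reason binarity appears in the hypothesis---is the ``$\Rightarrow$'' direction of the key equivalence. Without binarity, one only obtains a representation in terms of a \emph{set} of proper orders, as in Theorem~\ref{theo:properrepresentation}; binarity is exactly what allows one to pin down a single witness in $\desirset$ from each $\optset \in \rejectset[\choicefun]$, collapsing the representation to a single order.
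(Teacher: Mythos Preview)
Your proof is correct and follows essentially the same approach as the paper: both hinge on using binarity to establish the key identity $\rejectset[\choicefun]=\rejectset[{\desirset[\succ]}]$, from which everything else follows via Propositions~\ref{prop:orderequivD}--\ref{prop:DcoherentiffKDis}. The only difference is organisational: the paper defines $\succ$ first via Equation~\eqref{eq:properrepresentationone}, verifies $\choicefun=\choicefun[\succ]$ directly, and then derives $\rejectset[\choicefun]=\rejectset[{\desirset[\succ]}]$, whereas you build $\desirset$ first, establish $\rejectset[\choicefun]=\rejectset[\desirset]$, and then recover $\succ$ and $\choicefun=\choicefun[\succ]$ through Propositions~\ref{prop:orderequivD} and~\ref{prop:CequivK}; your route is slightly more modular, but note that the step $\rejectset[{\desirset[\succ]}]=\rejectset[{\choicefun[\succ]}]$ you invoke, while immediate from Equation~\eqref{eq:Cfromorder} and~\ref{ax:order:irreflexive}, is not stated as a lemma and deserves a one-line justification.
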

Here too, a version without $\mathcal{R}$-compatibility is easily obtained by setting $\mathcal{R}=\mathrm{mon}(\mathcal{R})=\emptyset$.

\section{Conclusions and future work}

The main conclusion of this paper is that decision making based on (sets of) strict partial orders is completely characterised by specific properties of the resulting choice functions. That is, any choice function $\choicefun$ that satisfies these properties is guaranteed to correspond to a (set of) strict partial order(s). By additionally imposing a set of rules on $\choicefun$, we can furthermore guaruantee that the representing orders are of a particular type. 
As discussed in Section~\ref{sec:abstractaxiom}, this includes total orders, weak orders, orders based on coherent lower previsions, orders based on probability measures (such as maximising expected utility) and evenly continuous orders, depending on the set of rules that is imposed.

In future work, I intend to study if there are other types of orders that correspond to a set of rules, as well as explain in more detail why the types of orders and sets of rules in Section~\ref{sec:abstractaxiom} indeed correspond to one another. I also intend to further elaborate on the implications of Theorems~\ref{theo:properrepresentation} and~\ref{theo:properrepresentationone}, demonstrate how they can be applied in various contexts, and establish connections with earlier axiomatic characterisations for decision methods that are based on (sets of) orders~\citep{savage1972,nau1992,seidenfeld1995,nau2006,seidenfeld2010,pmlr-v103-de-bock19b,ipmu2020debock:arxiv}


%
\begin{acknowledgement}
This work was funded by the BOF starting grant 01N04819. I also thank Gert de Cooman, with whom I am developing a theory of choice functions for finite option sets~\citep{debock2018,pmlr-v103-de-bock19b}
. My continuous discussions with him have been very helpful in developing a theory of choice functions that allows choosing from infinite option sets, as I do here.
\end{acknowledgement}
%

	 
\bibliographystyle{spbasic}
\bibliography{infiniteoptionsets}
%
	
	

\iftoggle{arxiv}{

\appendix

\section*{Proofs of our main results}

\begin{proof}{\textit{\textbf{of Proposition~\ref{prop:orderequivD} }}}
For the `if'-part, we consider a binary relation $\succ$ on $\opts$ and a proper set of options $\desirset$ that satisfies Equation~\eqref{eq:fromDtoO}. That $\desirset$ is equal to $\desirset[\succ]$ (and is hence uniquely determined by $\succ$) follows from the fact that, for any $\opt\in\opts$,
\begin{equation*}
\opt\in\desirset[\succ]
\Leftrightarrow
\opt\succ0
\Leftrightarrow
\opt-0\in\desirset
\Leftrightarrow
\opt\in\desirset,
\end{equation*}
using Equation~\eqref{eq:fromDtoO} for the second equivalence. That $\succ$ satisfies~\ref{ax:order:addition} follows directly from Equation~\eqref{eq:fromDtoO}. That $\succ$ satisfies~\ref{ax:order:irreflexive}, \ref{ax:order:transitive} and~\ref{ax:order:multiplication} follows directly from Equation~\eqref{eq:fromDtoO} and the fact that $\desirset$ satisfies~\ref{ax:desirs:nozero},~\ref{ax:desirs:sum} and~\ref{ax:desirs:lambda}, respectively. 

For the `only if'-part, we consider a binary relation $\succ$ on $\opts$ that is a proper order and prove that there is a proper set of options $\desirset$ that satisfies Equation~\eqref{eq:fromDtoO}. In particular, we will prove that this is the case for $\desirset=\desirset[\succ]$. That $\desirset[\succ]$ satisfies Equation~\eqref{eq:fromDtoO} follows from the fact that, for any $\opt,\altopt\in\opts$:
\begin{equation*}
\opt\succ\altopt
\Leftrightarrow
\opt-\altopt\succ0
\Leftrightarrow
\opt-\altopt\in\desirset[\succ],
\end{equation*}
where the first equivalence follows from~\ref{ax:order:addition} (with $w=-\altopt$ and $w=\altopt$). That $\desirset[\succ]$ satisfies~\ref{ax:desirs:nozero} and~\ref{ax:desirs:lambda} follows directly from Equation~\eqref{eq:fromOtoD} and the fact that $\succ$ satisfies~\ref{ax:order:irreflexive} and~\ref{ax:order:multiplication}, respectively. It remains to show that $\desirset[\succ]$ satisfies~\ref{ax:desirs:sum}. So consider any $\opt$ and $\altopt$ in $\desirset[\succ]$, implying that $\opt\succ0$ and $\altopt\succ0$. Since $\opt\succ0$, \ref{ax:order:addition} implies that also $\opt+\altopt\succ\altopt$. Since $\opt+\altopt\succ\altopt$ and $\altopt\succ0$, \ref{ax:order:transitive} implies that $\opt+\altopt\succ0$ and therefore, that $\opt+\altopt\in\desirset[\succ]$, as desired.
\end{proof}

\begin{proof}{\textit{\textbf{of Proposition~\ref{prop:CequivK} }}}
For the `if'-part, we consider a choice function $\choicefun$ and a proper set of option sets $\rejectset$ that satisfies Equation~\eqref{eq:fromKtoC}. To see why $\rejectset$ is equal to $\rejectset[\choicefun]$ (and is hence uniquely determined by $\choicefun$), first observe that for any $\optset\in\optsets$,
\begin{align*}
\optset\in\rejectset[\choicefun]
\Leftrightarrow
0\notin\choicefun(\optset\cup\{0\})
\Leftrightarrow
(\optset\cup\{0\})-0\in\rejectset
\Leftrightarrow
\optset\cup\{0\}\in\rejectset
\Leftrightarrow
\optset\in\rejectset,
\end{align*}
where the first two equivalences follow from Equations~\eqref{eq:fromCtoK} and~\eqref{eq:fromKtoC} and the last equivalence follows from~\ref{ax:infiniteoptionsets:removezero} and~\ref{ax:infiniteoptionsets:mono}. Since $\emptyset\notin\rejectset$ because of~\ref{ax:infiniteoptionsets:noempty} and $\emptyset\notin\rejectset[\choicefun]$ because of Equation~\eqref{eq:fromCtoK}, this implies that $\optset\in\rejectset[\choicefun]\Leftrightarrow\optset\in\rejectset$ for all $\optset\in\optsets_{\emptyset}$. Hence, $\rejectset=\rejectset[\choicefun]$.
Next, we prove that $\choicefun$ is proper. Since $\rejectset=\rejectset[\choicefun]$ and $\rejectset$ satisfies~\ref{ax:infiniteoptionsets:cone} (since $\rejectset$ is proper), we immediately find that $\choicefun$ satisfies \ref{ax:choice:mono}.
Consider now any $\opt\in\opts$. We know from \ref{ax:infiniteoptionsets:noempty} that $\emptyset\notin\rejectset$ and therefore, because of \ref{ax:infiniteoptionsets:removezero}, that also $\{\opt\}-\opt=\{0\}\notin\rejectset$. Taking into account Equation~\eqref{eq:fromKtoC}, it follows that $\choicefun(\{\opt\})=\{\opt\}$. Hence, $\choicefun$ satisfies~\ref{ax:choice:singleton}. That $\choicefun$ also satisfies \ref{ax:choice:translation} follows directly from Equation~\eqref{eq:fromKtoC}. Indeed, for any $\optset\in\optsets$ and $\opt,w\in\opts$:
\begin{equation*}
\opt\in\choicefun(\optset)
\then
\optset-\opt\notin\rejectset
\then
(\optset+w)-(\opt+w)\notin\rejectset
\then
\opt+w\in\choicefun(\optset+w).
\end{equation*}
To see that $\choicefun$ satisfies \ref{ax:choice:mono}, it suffices to consider any $\opt\in\opts$ and $\optset,\altoptset\in\optsets$ such that $\opt\in\optset\subseteq\altoptset$ and $\opt\in\choicefun(\altoptset)$. It then follows from Equation~\eqref{eq:fromKtoC} that $\altoptset-\opt\notin\rejectset$. Since $\optset-\opt\subseteq\altoptset-\opt$, \ref{ax:infiniteoptionsets:mono} therefore implies that $\optset-\opt\notin\rejectset$. A final application of Equation~\eqref{eq:fromKtoC} therefore yields $\opt\in\choicefun(\optset)$, as desired.

For the `only if'-part, we consider a proper choice function $\choicefun$ and prove that there is a proper set of option sets $\rejectset$ that satisfies Equation~\eqref{eq:fromKtoC}. In particular, we will prove that this is the case for $\rejectset=\rejectset[\choicefun]$. 
That $\rejectset[\choicefun]$ satisfies Equation~\eqref{eq:fromKtoC} follows from the fact that, for any $\optset\in\optsets$ and $\opt\in\optset$:
\begin{equation*}
\opt\in\choicefun(\optset)
\Leftrightarrow
0\in\choicefun(\optset-\opt)
\Leftrightarrow
0\in\choicefun((\optset-\opt)\cup\{0\})
\Leftrightarrow
\optset-\opt\in\rejectset[\choicefun],
\end{equation*}
where the first equivalence follows from~\ref{ax:choice:translation} (with $w=-\opt$ and $w=\opt$) and the second follows from the fact that $0\in\optset-\opt$ (because $\opt\in\optset$). It therefore remains to prove that $\rejectset[\choicefun]$ is a proper set of option sets. That $\rejectset[\choicefun]$ satisfies \ref{ax:infiniteoptionsets:cone} follows directly from the fact that $\choicefun$ satisfies~\ref{ax:choice:cone}. That $\rejectset[\choicefun]$ satisfies~\ref{ax:infiniteoptionsets:noempty} follows from Equation~\eqref{eq:fromCtoK} and the fact that $\emptyset\notin\optsets$. To show that $\rejectset[\choicefun]$ satisfies~\ref{ax:infiniteoptionsets:removezero}, we consider any $\optset\in\rejectset[\choicefun]$. Since~\ref{ax:choice:singleton} implies that $0\in\choicefun(\{0\})=\choicefun(\{0\}\cup\{0\})$, we know from Equation~\eqref{eq:fromCtoK} that $\{0\}\notin\rejectset[\choicefun]$, and therefore, since $\optset\in\rejectset[\choicefun]$, that $\optset\setminus\{0\}\neq\emptyset$. Hence, $\optset\setminus\{0\}\in\optsets$. Furthermore, since $\optset\in\rejectset[\choicefun]$, it follows from Equation~\eqref{eq:fromCtoK} that $0\notin\choicefun(\optset\cup\{0\})$ and therefore, since $\optset\cup\{0\}=(\optset\setminus\{0\})\cup\{0\}$, that $0\notin\choicefun((\optset\setminus\{0\})\cup\{0\})$. Applying Equation~\eqref{eq:fromCtoK} once more therefore yields $\optset\setminus\{0\}\in\rejectset[\choicefun]$. Hence, $\rejectset[\choicefun]$ satisfies~\ref{ax:infiniteoptionsets:removezero}. To see that $\rejectset[\choicefun]$ satisfies \ref{ax:infiniteoptionsets:mono}, we consider any $\optset,\altoptset\in\optsets$ such that $\optset\subseteq\altoptset$ and $\optset\in\rejectset[\choicefun]$. It then follows from Equation~\eqref{eq:fromCtoK} that $0\notin\choicefun(\optset\cup\{0\})$. Since $\optset\cup\{0\}\subseteq\altoptset\cup\{0\}$, it therefore follows from~\ref{ax:choice:mono} that $0\notin\choicefun(\altoptset\cup\{0\})$, or equivalently, that $\altoptset\in\rejectset[\choicefun]$.
\end{proof}

\begin{proof}{\textit{\textbf{of Proposition~\ref{prop:DcoherentiffKDis} }}}
Consider any set of options $\desirset$ and let $\rejectset[\desirset]$ be its corresponding set of option sets.

For the `only if'-part of the statement, we assume that $\desirset$ is proper and prove that $\rejectset[\desirset]$ is then proper as well. That $\rejectset[\desirset]$ satisfies~\ref{ax:infiniteoptionsets:noempty} follows from Equation~\eqref{eq:infinite:desirset:to:rejectset} and the fact that $\emptyset\notin\optsets$. That $\rejectset[\desirset]$ satisfies \ref{ax:infiniteoptionsets:removezero} follows from Equation~\eqref{eq:infinite:desirset:to:rejectset} and~\ref{ax:desirs:nozero}. That $\rejectset[\desirset]$ satisfies~\ref{ax:infiniteoptionsets:mono} is immediate from Equation~\eqref{eq:infinite:desirset:to:rejectset}. To see that $\rejectset[\desirset]$ also satisfies~\ref{ax:infiniteoptionsets:cone}, we consider any $\emptyset\neq\assessment\subseteq\rejectset[\desirset]$ and, for all $\phi\in\Phi_\assessment$, some $\lambda_\phi\in\Lambda_\assessment$. For any $\optset\in\assessment$, since $\optset\in\rejectset[\desirset]$, there is some $\opt^*\in\optset\cap\desirset$, which we denote by $\phi^*(\optset)$. For the resulting map $\phi^*\in\Phi_{\assessment}$, we have that $\phi^*(\optset)\in\desirset$ for all $\optset\in\assessment$, and therefore, because of~\ref{ax:desirs:sum} and~\ref{ax:desirs:lambda}, that $\sum_{\optset\in\assessment}\lambda_{\phi^*}(\optset)\phi^*(\optset)\in\desirset$. Hence,
\begin{equation*}
\big\{\textstyle\sum_{A\in\assessment}\lambda_\phi(A)\phi(A)\colon\phi\in\Phi_\assessment\big\}
\in\rejectset[\desirset].
\end{equation*}

For the `if'-part of the statement, we assume that $\rejectset[\desirset]$ is proper and prove that $\desirset$ is then proper as well. To prove that $\desirset$ satisfies~\ref{ax:desirs:nozero}, we assume \emph{ex absurdo} that $0\in\desirset$. This implies that $\{0\}\in\rejectset[\desirset]$ and therefore, because of~\ref{ax:infiniteoptionsets:removezero}, that $\emptyset\in\rejectset[\desirset]$, contradicting~\ref{ax:infiniteoptionsets:noempty}. To prove that $\desirset$ satisfies~\ref{ax:desirs:sum}, we consider any $\opt,\altopt\in\desirset$. It then follows from Equation~\eqref{eq:infinite:desirset:to:rejectset} that $\{\opt\}\in\rejectset[\desirset]$ and $\{\altopt\}\in\rejectset[\desirset]$. Now let $\assessment=\{\{\opt\},\{\altopt\}\}$. Then $\emptyset\neq\assessment\subseteq\rejectset[\desirset]$ and $\Phi_{\assessment}$ contains only a single function $\phi^*$, defined by $\phi^*(\{\opt\})=\opt$ and $\phi^*(\{\altopt\})=\altopt$. Let $\lambda_{\phi^*}\in\Lambda_{\assessment}$ be defined by $\lambda_{\phi^*}(\{\opt\})=\lambda_{\phi^*}(\{\altopt\})\coloneqq1$. It then follows from~\ref{ax:infiniteoptionsets:cone} that 
\begin{align*}
\{\opt+\altopt\}
&=
\big\{\lambda_{\phi^*}(\{\opt\})\phi^*(\{\opt\})
+ \lambda_{\phi^*}(\{\altopt\})\phi^*(\{\altopt\})\big\}\\
&=
\big\{\textstyle\sum_{\optset\in\assessment}
\lambda_{\phi^*}(\optset)\phi^*(\optset)
\big\}
=
\big\{\textstyle\sum_{\optset\in\assessment}
\lambda_{\phi}(\optset)\phi(\optset)
\colon
\phi\in\Phi_{\assessment}
\big\}
\in\rejectset[\desirset],
\end{align*}
or equivalently, that $\opt+\altopt\in\desirset$. The proof for \ref{ax:desirs:lambda} is similar. 
Consider any $\opt\in\desirset$ and $\lambda\in\posreals$. It then follows from Equation~\eqref{eq:infinite:desirset:to:rejectset} that $\{\opt\}\in\rejectset[\desirset]$. Now let $\assessment=\{\{\opt\}\}$. Then $\emptyset\neq\assessment\subseteq\rejectset[\desirset]$ and $\Phi_{\assessment}$ contains only a single function $\phi^*$, defined by $\phi^*(\{\opt\})=\opt$. Let $\lambda_{\phi^*}\in\Lambda_{\assessment}$ be defined by $\lambda_{\phi^*}(\{\opt\})\coloneqq\lambda$. It then follows from~\ref{ax:infiniteoptionsets:cone} that 
\begin{align*}
\{\lambda\opt\}
&=
\big\{\lambda_{\phi^*}(\{\opt\})\phi^*(\{\opt\})\big\}\\
&=
\big\{\textstyle\sum_{\optset\in\assessment}
\lambda_{\phi^*}(\optset)\phi^*(\optset)
\big\}
=
\big\{\textstyle\sum_{\optset\in\assessment}
\lambda_{\phi}(\optset)\phi(\optset)
\colon
\phi\in\Phi_{\assessment}
\big\}
\in\rejectset[\desirset],
\end{align*}
or equivalently, that $\lambda\opt\in\desirset$.
\end{proof}

\begin{lemma}\label{lem:intersectionofproperKsisproper}
Consider any non-empty set $\mathcal{K}$ of proper sets of option sets. Then $\rejectset[\mathcal{K}]\coloneqq\bigcap\{\rejectset\colon\rejectset\in\mathcal{K}\}$ is a proper set of options sets as well.
\end{lemma}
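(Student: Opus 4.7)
The plan is to verify each of the four axioms \ref{ax:infiniteoptionsets:noempty}--\ref{ax:infiniteoptionsets:mono} directly for $\rejectset[\mathcal{K}]$ by invoking the corresponding axiom for each $\rejectset\in\mathcal{K}$ separately. The non-emptiness of $\mathcal{K}$ ensures that $\rejectset[\mathcal{K}]=\bigcap\mathcal{K}$ is a well-defined subset of $\optsets_\emptyset$, so only the four axiom checks remain. Each axiom has the form of either a universal non-membership claim or an implication whose hypothesis concerns memberships/inclusions with respect to $\rejectset$ and whose conclusion is another such membership claim---exactly the kind of ``Horn-like'' rule that propagates through intersections.

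First I would dispatch the three easy axioms. For \ref{ax:infiniteoptionsets:noempty}, since $\emptyset\notin\rejectset$ for every $\rejectset\in\mathcal{K}$, we immediately get $\emptyset\notin\rejectset[\mathcal{K}]$. For \ref{ax:infiniteoptionsets:removezero}, if $\optset\in\rejectset[\mathcal{K}]$ then $\optset\in\rejectset$ for every $\rejectset\in\mathcal{K}$, so \ref{ax:infiniteoptionsets:removezero} applied to each $\rejectset$ yields $\optset\setminus\set{0}\in\rejectset$ for every $\rejectset\in\mathcal{K}$, hence $\optset\setminus\set{0}\in\rejectset[\mathcal{K}]$. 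For \ref{ax:infiniteoptionsets:mono}, the same pattern works: if $\optset\in\rejectset[\mathcal{K}]$ and $\optset\subseteq\altoptset\in\optsets$, then $\optset\in\rejectset$ for every $\rejectset\in\mathcal{K}$, and monotonicity in each such $\rejectset$ gives $\altoptset\in\rejectset$, so $\altoptset\in\rejectset[\mathcal{K}]$. Each of these is essentially a one-line argument.

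The single step that merits attention is \ref{ax:infiniteoptionsets:cone}. Given $\emptyset\neq\assessment\subseteq\rejectset[\mathcal{K}]$ and a choice of $\lambda_\phi\in\Lambda_\assessment$ for every $\phi\in\Phi_\assessment$, I would fix an arbitrary $\rejectset\in\mathcal{K}$ and note that $\assessment\subseteq\rejectset[\mathcal{K}]\subseteq\rejectset$ together with $\assessment\neq\emptyset$ meets the hypotheses of \ref{ax:infiniteoptionsets:cone} for $\rejectset$; applying it yields $\big\{\sum_{A\in\assessment}\lambda_\phi(A)\phi(A)\colon\phi\in\Phi_\assessment\big\}\in\rejectset$. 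Since the same argument, with the same choice of $\lambda_\phi$, goes through uniformly for every $\rejectset\in\mathcal{K}$, the set in question lies in $\rejectset[\mathcal{K}]$ as required.

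I do not anticipate any real obstacle here: the result is a routine instance of the general principle that classes defined by closure rules of the form ``if certain things are in the set, then such-and-such is also in the set'' together with forbidden-element constraints are automatically closed under arbitrary non-empty intersections. The only thing to be mildly careful about is that in \ref{ax:infiniteoptionsets:cone} the witness family $(\lambda_\phi)_{\phi\in\Phi_\assessment}$ is chosen once and reused across all $\rejectset\in\mathcal{K}$---but this is exactly what the axiom permits, since it is a universal quantification over such families.
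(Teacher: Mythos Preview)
Your proposal is correct and follows the same approach as the paper's proof: verify each axiom \ref{ax:infiniteoptionsets:noempty}--\ref{ax:infiniteoptionsets:mono} by reducing to the corresponding axiom for each $\rejectset\in\mathcal{K}$, using non-emptiness of $\mathcal{K}$ to ensure that membership in $\rejectset[\mathcal{K}]$ is equivalent to membership in every $\rejectset\in\mathcal{K}$. The paper's version is in fact terser---it spells out only \ref{ax:infiniteoptionsets:noempty} and \ref{ax:infiniteoptionsets:removezero} and declares the rest ``completely analogous''---so your more explicit treatment of \ref{ax:infiniteoptionsets:cone} is a harmless elaboration of the same idea.
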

\begin{proof}
Observe that for any $\optset\in\optsets_{\emptyset}$, since $\mathcal{K}$ is non-empty, we have that $\optset\in\rejectset[\mathcal{K}]$ if and only $\optset\in\rejectset$ for all $\rejectset\in\mathcal{K}$. Given this observation, the statement follows directly from Definition~\ref{def:coherence:infiniteoptionsets}. For example, for every $\rejectset\in\mathcal{K}$, we know from~\ref{ax:infiniteoptionsets:noempty} that $\emptyset\notin\rejectset$. Hence, $\emptyset\notin\rejectset[\mathcal{K}]$, so $\rejectset[\mathcal{K}]$ satisfies~\ref{ax:infiniteoptionsets:noempty}. As a second example, if $\optset\in\rejectset[\mathcal{K}]$, then $\optset\in\rejectset$ for all $\rejectset\in\mathcal{K}$. Since every $\rejectset\in\mathcal{K}$ is proper, it follows from~\ref{ax:infiniteoptionsets:removezero} that $\optset\setminus\{0\}\in\rejectset$ for all $\rejectset\in\mathcal{K}$. Hence, $\optset\setminus\{0\}\in\rejectset[\mathcal{K}]$. We conclude that $\rejectset[\mathcal{K}]$ satisfies~\ref{ax:infiniteoptionsets:removezero}. The proof for~\ref{ax:infiniteoptionsets:cone} and~\ref{ax:infiniteoptionsets:mono} is completely analogous.
\end{proof}

\begin{proof}{\textit{\textbf{of Theorem~\ref{theo:infinite:coherentrepresentation:twosided} }}}
For the `if'-part, consider any non-empty set $\setofdesirsets\subseteq\desirsets$ of proper sets of options such that $\rejectset=\bigcap\cset{\rejectset[\desirset]}{\desirset\in\setofdesirsets}$. Then for any $\desirset\in\setofdesirsets$, $\rejectset[\desirset]$ is a proper set of option sets because of Proposition~\ref{prop:DcoherentiffKDis}. 
Since $\setofdesirsets$ is non-empty, it therefore follows from Lemma~\ref{lem:intersectionofproperKsisproper} that $\rejectset=\bigcap\cset{\rejectset[\desirset]}{\desirset\in\setofdesirsets}$ is proper set of option sets as well.

For the `only if' part, we assume that $\rejectset$ is a proper set of option sets.
We need to prove that there is a non-empty set $\setofdesirsets\subseteq\desirsets$ of proper sets of options such that $\rejectset=\bigcap\cset{\rejectset[\desirset]}{\desirset\in\setofdesirsets}$, and that $\desirsets\group{\rejectset}$ is the largest such set. Clearly, if there is a set $\setofdesirsets\subseteq\desirsets$ such that $\rejectset=\bigcap\cset{\rejectset[\desirset]}{\desirset\in\setofdesirsets}$, then $\setofdesirsets$ must be a subset of $\desirsets\group{\rejectset}$. 
It therefore suffices to prove the statement for the particular set $\setofdesirsets\coloneqq\desirsets(\rejectset)$. That is, we will prove that $\desirsets(\rejectset)$ is non-empty and that $\rejectset=\bigcap\cset{\rejectset[\desirset]}{\desirset\in\desirsets(\rejectset)}$. To that end, we will prove that for any $\optset\in\optsets_{\emptyset}$ such that $\optset\notin\rejectset$, there is a set of options $\desirset\in\desirsets(\rejectset)$ such that $\optset\notin\rejectset[\desirset]$. On the one hand, this implies that $\bigcap\cset{\rejectset[\desirset]}{\desirset\in\desirsets(\rejectset)}\subseteq\rejectset$ and therefore, since the converse set inclusion holds trivially, that $\rejectset=\bigcap\cset{\rejectset[\desirset]}{\desirset\in\desirsets(\rejectset)}$. On the other hand, since it follows from \ref{ax:infiniteoptionsets:noempty} that there is at least one option set $\optset\in\optsets_{\emptyset}$ such that $\optset\notin\rejectset$---in particular, $\optset=\emptyset$---it also implies that $\desirsets(\rejectset)$ is non-empty.

So consider any $\optset\in\optsets_{\emptyset}$ such that $\optset\notin\rejectset$. We need to prove that there is some $\desirset\in\desirsets(\rejectset)$ such that $\optset\notin\rejectset[\desirset]$. We first consider the case $\rejectset=\emptyset$. In that case, let $\desirset=\emptyset$. It then follows from Equation~\eqref{eq:infinite:desirset:to:rejectset} that $\rejectset[\desirset]=\emptyset$, which implies that $\rejectset\subseteq\rejectset[\desirset]$ and $\optset\notin\rejectset[\desirset]$. Hence, since $\desirset=\emptyset$ is trivially a proper set of options, we conclude that $\desirset\in\desirsets(\rejectset)$ and $\optset\notin\rejectset[\desirset]$. 
It remains to consider the case $\rejectset\neq\emptyset$. In that case, assume \emph{ex absurdo} that $\optset\in\rejectset[\desirset]$ for every $\desirset\in\desirsets(\rejectset)$. We will show that this leads to a contradiction.

Consider any $\phi\in\Phi_{\rejectset}$ and let
\begin{equation}\label{eq:posiofphiK}
\desirset[\phi]\coloneqq\big\{\textstyle\sum_{\altoptset\in\rejectset}\lambda(\altoptset)\phi(\altoptset)\colon \lambda\in\Lambda_{\rejectset}\big\}
\end{equation}
be the set of all finite positive linear combinations of options in $\phi(\rejectset)$. 
Then for any $\altoptset\in\rejectset$, we have that $\phi(\altoptset)\in\desirset[\phi]$---just let $\lambda(\altoptset)\coloneqq1$ and $\lambda(\altoptset')\coloneqq0$ for all $\altoptset'\in\rejectset\setminus\{\altoptset\}$---and therefore, since $\phi(\altoptset)\in\altoptset$, also that $\altoptset\in\rejectset[{\desirset[\phi]}]$. Hence, $\rejectset\subseteq\rejectset[{\desirset[\phi]}]$.
Furthermore, by construction, $\desirset[\phi]$ clearly satisfies~\ref{ax:desirs:lambda} and~\ref{ax:desirs:sum}.

We now consider two cases: $0\notin\desirset[\phi]$ and $0\in\desirset[\phi]$. If $0\notin\desirset[\phi]$, then $\desirset[\phi]$ satisfies~\ref{ax:desirs:nozero}. Since it also satisfies
\ref{ax:desirs:lambda} and \ref{ax:desirs:sum}, we therefore find that $\desirset[\phi]$ is a proper set of options. Since $\rejectset\subseteq\rejectset[{\desirset[\phi]}]$, this implies that $\desirset[\phi]\in\desirsets(\rejectset)$. By our assumption, it then follows that $\optset\in\rejectset[{\desirset[\phi]}]$, implying that $\optset\cap\desirset[\phi]\neq\emptyset$ and therefore also $(\optset\cup\{0\})\cap\desirset[\phi]\neq\emptyset$. If $0\in\desirset[\phi]$, then $0\in(\optset\cup\{0\})\cap\desirset[\phi]$, implying once more that $(\optset\cup\{0\})\cap\desirset[\phi]\neq\emptyset$. Hence, in both cases, we find that $(\optset\cup\{0\})\cap\desirset[\phi]\neq\emptyset$. Now let $\opt[\phi]$ be any element of $(\optset\cup\{0\})\cap\desirset[\phi]$. Since $\opt[\phi]\in\desirset[\phi]$, it follows from Equation~\eqref{eq:posiofphiK} that there is some $\lambda_\phi\in\Lambda_{\rejectset}$ such that $\sum_{\altoptset\in\rejectset}\lambda_\phi(\altoptset)\phi(\altoptset)=\opt[\phi]\in\optset\cup\{0\}$.

In summary then, for any $\phi\in\Phi_{\rejectset}$, we have found some $\lambda_\phi\in\Lambda_{\rejectset}$ such that $\sum_{\altoptset\in\rejectset}\lambda_\phi(\altoptset)\phi(\altoptset)\in\optset\cup\{0\}$. Now let $A^*\coloneqq\{\sum_{\altoptset\in\rejectset}\lambda_\phi(\altoptset)\phi(\altoptset)\colon\phi\in\Phi_{\rejectset}\}$. Since $\rejectset\neq\emptyset$, it then follows from~\ref{ax:infiniteoptionsets:cone} that $A^*\in\rejectset$. Since $A^*$ is by construction a subset of $\optset\cup\{0\}$,~\ref{ax:infiniteoptionsets:mono} therefore implies that $\optset\cup\{0\}\in\rejectset$. This is however impossible because $\optset\notin\rejectset$. If $0\in\optset$, this contradiction is trivial. If $0\notin\optset$, this contradiction follows from~\ref{ax:infiniteoptionsets:removezero}.
\end{proof}

\begin{lemma}\label{lem:intersectionofproperDsisproper}
Consider any non-empty set $\mathcal{D}$ of proper sets of options. Then $\desirset[\mathcal{D}]\coloneqq\bigcap\{\desirset\colon\desirset\in\mathcal{D}\}$ is a proper set of options as well.
\end{lemma}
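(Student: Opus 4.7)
The plan is to verify, axiom by axiom, that $\desirset[\mathcal{D}]$ satisfies the three defining properties \ref{ax:desirs:nozero}--\ref{ax:desirs:lambda} of a proper set of options, in exact parallel with the proof of Lemma~\ref{lem:intersectionofproperKsisproper}. The starting observation is the standard one about intersections of a non-empty family: for any $\opt\in\opts$, $\opt\in\desirset[\mathcal{D}]$ if and only if $\opt\in\desirset$ for every $\desirset\in\mathcal{D}$. Non-emptiness of $\mathcal{D}$ will enter only when establishing \ref{ax:desirs:nozero}; the other two axioms are preserved by arbitrary intersections, empty or not.

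For \ref{ax:desirs:nozero}, I would fix any $\desirset\in\mathcal{D}$, which exists because $\mathcal{D}\neq\emptyset$. Since $\desirset$ is proper, \ref{ax:desirs:nozero} applied to $\desirset$ gives $0\notin\desirset$, and therefore $0\notin\desirset[\mathcal{D}]\subseteq\desirset$.

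For \ref{ax:desirs:sum}, I would take any $\opt,\altopt\in\desirset[\mathcal{D}]$. Then for each $\desirset\in\mathcal{D}$, both $\opt$ and $\altopt$ lie in $\desirset$; since $\desirset$ satisfies \ref{ax:desirs:sum}, this gives $\opt+\altopt\in\desirset$. As this holds for every $\desirset\in\mathcal{D}$, I conclude $\opt+\altopt\in\desirset[\mathcal{D}]$. The argument for \ref{ax:desirs:lambda} is completely analogous: from $\opt\in\desirset[\mathcal{D}]$ and $\lambda\in\posreals$, \ref{ax:desirs:lambda} applied to each $\desirset\in\mathcal{D}$ yields $\lambda\opt\in\desirset$, whence $\lambda\opt\in\desirset[\mathcal{D}]$.

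There is no real obstacle here---the proof is just the observation that the three axioms are preserved under non-empty intersections---but the one point to be careful about is citing $\mathcal{D}\neq\emptyset$ at the \ref{ax:desirs:nozero} step; without it, one could not exhibit any $\desirset$ to witness $0\notin\desirset$, and indeed the empty intersection $\bigcap\emptyset=\opts$ (taken inside $\opts$) would contain $0$ and fail \ref{ax:desirs:nozero}.
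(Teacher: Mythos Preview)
Your proof is correct and follows essentially the same approach as the paper: both rest on the observation that membership in the intersection is equivalent to membership in every $\desirset\in\mathcal{D}$, after which the axioms are verified directly from Definition~\ref{def:propdesir}. The paper's version is simply terser, omitting the axiom-by-axiom verification that you spell out.
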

\begin{proof}
Observe that for any $\opt\in\opts$, since $\mathcal{D}$ is non-empty, we have that $\opt\in\desirset[\mathcal{D}]$ if and only if $\opt\in\desirset$ for all $\desirset\in\mathcal{D}$. Given this observation, the statement follows directly from Definition~\ref{def:propdesir}.
\end{proof}

\begin{proof}{\textit{\textbf{of Theorem~\ref{theo:infinite:abstractrepresentation:twosided} }}}
For the `if'-part of the statement, we assume that there is a non-empty set $\setofdesirsets\subseteq\desirsets_{\mathcal{R}}$ of proper $\mathcal{R}$-compatible sets of options such that $\rejectset=\bigcap\cset{\rejectset[\desirset]}{\desirset\in\setofdesirsets}$. It then follows from Theorem~\ref{theo:infinite:coherentrepresentation:twosided} that $\rejectset$ is a proper set of option sets. To show that it is also $\mathcal{R}$-compatible, we consider any $(\mathcal{A},B)\in\mathcal{R}$ such that $A\in\rejectset$ for all $A\in\mathcal{A}$. Fix any $\desirset\in\setofdesirsets$. Then for all $A\in\mathcal{A}$, since $A\in\rejectset\subseteq\rejectset[\desirset]$, we know that $A\cap\desirset\neq\emptyset$. Since $\desirset$ is $\mathcal{R}$-compatible, this implies that $B\cap\desirset\neq\emptyset$, which in turn implies that $B\in\rejectset[\desirset]$. Since this is true for every $\desirset\in\setofdesirsets$, we find that $B\in\bigcap\cset{\rejectset[\desirset]}{\desirset\in\setofdesirsets}=\rejectset$. Hence, $\rejectset$ is $\mathcal{R}$-compatible.

For the `only if'-part, we assume that $\rejectset$ is a proper $\mathcal{R}$-compatible set of option sets. We need to prove that there is a non-empty set $\setofdesirsets\subseteq\desirsets_{\mathcal{R}}$ of proper $\mathcal{R}$-compatible sets of options such that $\rejectset=\bigcap\cset{\rejectset[\desirset]}{\desirset\in\setofdesirsets}$, and that $\desirsets_{\mathcal{R}}\group{\rejectset}$ is the largest such set. Clearly, if there is a set $\setofdesirsets\subseteq\desirsets_{\mathcal{R}}$ such that $\rejectset=\bigcap\cset{\rejectset[\desirset]}{\desirset\in\setofdesirsets}$, then $\setofdesirsets$ must be a subset of $\desirsets_{\mathcal{R}}\group{\rejectset}$. 
It therefore suffices to prove the statement for the particular set $\setofdesirsets\coloneqq\desirsets_{\mathcal{R}}(\rejectset)$. That is, we will prove that $\desirsets_{\mathcal{R}}(\rejectset)$ is non-empty and that $\rejectset=\bigcap\cset{\rejectset[\desirset]}{\desirset\in\desirsets_{\mathcal{R}}(\rejectset)}$. 
To that end, we will prove that for any $\optset\in\optsets_{\emptyset}$ such that $\optset\notin\rejectset$, there is a set of options $\desirset[\optset]\in\desirsets_{\mathcal{R}}(\rejectset)$ such that $\smash{\optset\notin\rejectset[{\desirset[\optset]}]}$. On the one hand, this implies that $\bigcap\cset{\rejectset[\desirset]}{\desirset\in\desirsets_{\mathcal{R}}\group{\rejectset}}\subseteq\rejectset$ and therefore, since the converse set inclusion holds trivially, that $\rejectset=\bigcap\cset{\rejectset[\desirset]}{\desirset\in\desirsets_{\mathcal{R}}\group{\rejectset}}$. On the other hand, since it follows from \ref{ax:infiniteoptionsets:noempty} that there is at least one option set $\optset\in\optsets_{\emptyset}$ such that $\optset\notin\rejectset$---in particular, $\optset=\emptyset$---it also implies that $\desirsets_{\mathcal{R}}(\rejectset)$ is non-empty. 
So consider any $\optset\in\optsets_{\emptyset}$ such that $\optset\notin\rejectset$.


Since $\rejectset$ is proper, Theorem~\ref{theo:infinite:coherentrepresentation:twosided} implies that there is a non-empty set $\setofdesirsets$ of proper sets of options such that $\smash{\rejectset=\bigcap\cset{\rejectset[\desirset]}{\desirset\in\setofdesirsets}}$. Since $\optset\notin\rejectset$, this implies that there must be some $\tilde{\desirset}\in\setofdesirsets$ such that $\rejectset\subseteq\rejectset[\tilde{\desirset}]$ and $\smash{\optset\notin\rejectset[\tilde{\desirset}]}$. 
Let $\desirsets$ be the set of all proper sets of options and consider the set $\smash{\setofdesirsets_{\tilde{\desirset}}\coloneqq\{\desirset\in\desirsets\colon\rejectset\subseteq\rejectset[\desirset]\subseteq\rejectset[\tilde{\desirset}]\}}$, partially ordered by set inclusion. In particular, for any two $\desirset[1],\desirset[2]\in\setofdesirsets_{\tilde{\desirset}}$, we say that $\desirset[2]$ dominates $\desirset[1]$, denoted by $\desirset[1]\sqsubset\desirset[2]$, if $\desirset[2]\subset\desirset[1]$; so subsets dominate their supersets. We will use Zorn's lemma to prove that this partially ordered set has a maximal---undominated---element. To do so, we must show that $\setofdesirsets_{\tilde{\desirset}}$ is non-empty and that any chain in $\setofdesirsets_{\tilde{\desirset}}$---any completely ordered subset of $\setofdesirsets_{\tilde{\desirset}}$---has an upper bound in $\setofdesirsets_{\tilde{\desirset}}$. That $\setofdesirsets_{\tilde{\desirset}}$ is non-empty follows from the fact that it contains $\smash{\tilde{\desirset}}$. So consider any chain $\{\desirset[i]\}_{i\in I}$ in $\setofdesirsets_{\tilde{\desirset}}$. Since $\setofdesirsets_{\tilde{\desirset}}$ is a non-empty set of proper sets of options, it follows from Lemma~\ref{lem:intersectionofproperDsisproper} that the intersection $\desirset[I]\coloneqq\cap_{i\in I}\desirset[i]$ is a proper set of options as well.
Furthermore, since $\rejectset\subseteq\rejectset[{\desirset[i]}]$ for all $i\in I$, we also have that $\rejectset\subseteq\rejectset[{\desirset[I]}]$. 
Hence, $\desirset[I]\in\setofdesirsets_{\tilde{\desirset}}$. Since $\desirset[I]$ is also by definition an upper bound for the chain $\{\desirset[i]\}_{i\in I}$, in the sense that $\desirset[i]\sqsubseteq\desirset[I]$ for all $i\in I$, we conclude that $\{\desirset[i]\}_{i\in I}$ has an upper bound in $\setofdesirsets_{\tilde{\desirset}}$. Since this is true for every chain in $\setofdesirsets_{\tilde{\desirset}}$, and since $\setofdesirsets_{\tilde{\desirset}}$ is non-empty, it follows from Zorn's lemma that $\setofdesirsets_{\tilde{\desirset}}$ has a maximal element. 
Let $\desirset[\optset]$ be any such a maximal element. 

Since $\smash{\desirset[\optset]\in\setofdesirsets_{\tilde{\desirset}}}$, we know that $\desirset[\optset]$ is proper and that $\smash{\rejectset\subseteq\rejectset[{\desirset[\optset]}]\subseteq\rejectset[\tilde{\desirset}]}$. Since $\smash{\optset\notin\rejectset[\tilde{\desirset}]}$, this also implies that $\optset\notin\rejectset[{\desirset[\optset]}]$. The only thing left to prove, therefore, is that $\desirset[\optset]$ is $\mathcal{R}$-compatible. Assume \emph{ex absurdo} that it is not. We will show that this leads to a contradiction.

Since $\desirset[\optset]$ is proper but not $\mathcal{R}$-compatible, it follows from Definition~\ref{def:abstractaxiom:D} that there is some $(\mathcal{A}^*,B^*)\in\mathcal{R}$ such that $A^*\cap\desirset[\optset]\neq\emptyset$ for all $A^*\in\mathcal{A}$ but $B^*\cap\desirset[\optset]=\emptyset$.
There are now two possibilities: $\mathcal{A}^*=\emptyset$ and $\mathcal{A}^*\neq\emptyset$. As we will see, they both lead to a contradiction. If $\mathcal{A}^*=\emptyset$, it follows from the $\mathcal{R}$-compatibility of $\rejectset$ that $B^*\in\rejectset\subseteq\rejectset[{\desirset[\optset]}]$, contradicting the fact that $B^*\cap\desirset[\optset]=\emptyset$.
It remains to consider the case $\mathcal{A}^*\neq\emptyset$.
In that case, let $M\coloneqq\opts\setminus\desirset[\optset]$. Then $M\in\optsets$ because~\ref{ax:desirs:nozero} implies that $0\notin\desirset[\optset]$, so $0\in M$ and hence $M\neq\emptyset$. Since $(\mathcal{A}^*,B^*)\in\mathcal{R}$ and $\mathcal{A}^*\neq\emptyset$, the monotonicity of $\mathcal{R}$ implies that $(\{A^*\cup M\colon A^*\in\mathcal{A}^*\},B^*\cup M)\in\mathcal{R}$.
Consider now any $\desirset\in\setofdesirsets$ and any $A^*\in\mathcal{A}^*$. 
If $\desirset\subset\desirset[\optset]$, then $\rejectset\subseteq\rejectset[\desirset]\subset\rejectset[{\desirset[\optset]}]\subseteq\rejectset[\tilde{\desirset}]$, so $\desirset\in\setofdesirsets_{\tilde{\desirset}}$. This is impossible because $\desirset[\optset]$ is a maximal element of $\setofdesirsets_{\tilde{\desirset}}$ with respect to $\sqsubset$ and therefore a minimal element with respect to $\subset$. Hence, we find that either $\desirset=\desirset[\optset]$ or $\desirset\cap(\opts\setminus\desirset[\optset])\neq\emptyset$. If $\desirset=\desirset[\optset]$, then since $A^*\cap\desirset[\optset]\neq\emptyset$ and therefore $(A^*\cup M)\cap\desirset[\optset]\neq\emptyset$, we find that $A^*\cup M\in\rejectset[{\desirset[\optset]}]=\rejectset[\desirset]$. If $\desirset\cap(\opts\setminus\desirset[\optset])\neq\emptyset$, then $\desirset\cap M\neq\emptyset$ and therefore also $\desirset\cap(A^*\cup M)\neq\emptyset$, which implies that $A^*\cup M\in\rejectset[\desirset]$. Hence, in both cases, we find that $A^*\cup M\in\rejectset[\desirset]$. Since this is true for every $\desirset\in\setofdesirsets$ and $A^*\in\mathcal{A}^*$, and since $\smash{\rejectset=\bigcap\cset{\rejectset[\desirset]}{\desirset\in\setofdesirsets}}$, it follows that $A^*\cup M\in\rejectset$ for all $A^*\in\mathcal{A}^*$. 
Since $(\{A^*\cup M\colon A^*\in\mathcal{A}^*\},B^*\cup M)\in\mathcal{R}$, the $\mathcal{R}$-compatibility of $\rejectset$ therefore implies that $B^*\cup M\in\rejectset\subseteq\rejectset[{\desirset[\optset]}]$, so $(B^*\cup M)\cap\desirset[\optset]\neq\emptyset$. Since $B^*\cap\desirset[\optset]=\emptyset$, this implies that $M\cap\desirset[\optset]\neq\emptyset$, which is clearly impossible because $M=\opts\setminus\desirset[\optset]$. Hence, we find that $\desirset[\optset]$ must indeed be $\mathcal{R}$-compatible.
\end{proof}

\begin{proof}{\textit{\textbf{of Proposition~\ref{prop:monotonificationmakesnodifference} }}}
Since $\mathcal{R}$ is a subset of $\mathrm{mon}(\mathcal{R})$, we trivially have that $\mathrm{mon}(\mathcal{R})$-compatibility implies $\mathcal{R}$-compatibility. To prove the converse, asume that $\desirset$ is $\mathcal{R}$-compatible and consider any $(\mathcal{A},B)\in\mathrm{mon}(\mathcal{R})$ such that $A\cap\desirset\neq\emptyset$ for all $A\in\mathcal{A}$. We need to prove that $B\cap\desirset\neq\emptyset$. If $(\mathcal{A},B)\in\mathcal{R}$, this follows directly from the $\mathcal{R}$-compatibility of $\desirset$. Otherwise, since $(\mathcal{A},B)\in\mathrm{mon}(\mathcal{R})$, there are $(\mathcal{A}^*,B^*)\in\mathcal{R}$ and $M\in\optsets$ such that $\mathcal{A}=\{A^*\cup M\colon A^*\in\mathcal{A}^*\}$, $B=B^*\cup M$ and $\mathcal{A}^*\neq\emptyset$. Since $A\cap\desirset\neq\emptyset$ for all $A\in\mathcal{A}$, this implies that $(A^*\cup M)\cap\desirset\neq\emptyset$ for all $A^*\in\mathcal{A}^*$. We now consider two cases: $M\cap\desirset\neq\emptyset$ and $M\cap\desirset=\emptyset$. If $M\cap\desirset\neq\emptyset$, then also $B\cap\desirset=(B^*\cup M)\cap\desirset\neq\emptyset$, as desired. If $M\cap\desirset=\emptyset$, then for all $A^*\in\mathcal{A}^*$, we infer from $(A^*\cup M)\cap\desirset\neq\emptyset$ that $A^*\cap\desirset\neq\emptyset$. Since $\desirset$ is $\mathcal{R}$-compatible and $(\mathcal{A}^*,B^*)\in\mathcal{R}$, this implies that $B^*\cap\desirset\neq\emptyset$. Hence, also in this second case, $B\cap\desirset=(B^*\cup M)\cap\desirset\neq\emptyset$. 
\end{proof}

\begin{proof}{\textit{\textbf{of Theorem~\ref{theo:infinite:abstractrepresentation:twosided:monotonification} }}}
Since $\mathrm{mon}(\mathcal{R})$ is clearly a monotone set of rules, we know from Theorem~\ref{theo:infinite:abstractrepresentation:twosided} that $\rejectset$ is proper and $\mathrm{mon}(\mathcal{R})$-compatible if and only if there is a non-empty set $\setofdesirsets\subseteq\desirsets_{\mathrm{mon}(\mathcal{R})}
$ such that $\rejectset=\bigcap\cset{\rejectset[\desirset]}{\desirset\in\setofdesirsets}$, and that the largest such set $\setofdesirsets$ is $\desirsets_{\mathrm{mon}(\mathcal{R})}\group{\rejectset}\coloneqq\cset{\desirset\in\desirsets_{\mathrm{mon}(\mathcal{R})}}{\rejectset\subseteq\rejectset[\desirset]}$. Since we know from Proposition~\ref{prop:monotonificationmakesnodifference} that $\desirsets_{\mathrm{mon}(\mathcal{R})}=\desirsets_{\mathcal{R}}$ and hence also $\desirsets_{\mathrm{mon}(\mathcal{R})}(\rejectset)=\desirsets_{\mathcal{R}}(\rejectset)$, this concludes the proof.
\end{proof}

\begin{proof}{\textit{\textbf{of Theorem~\ref{theo:properrepresentation} }}}
For the `if'-part of the statement, we assume that there is a non-empty set $\mathcal{O}\subseteq\mathbf{O}_{\mathcal{R}}$ of $\mathcal{R}$-compatible proper orders such that $\choicefun=\choicefun[\mathcal{O}]$. Let $\setofdesirsets\coloneqq\{\desirset[\succ]\colon\hspace{-2pt}\succ\,\in\hspace{-1pt}\raisebox{-0.7pt}{$\mathcal{O}$}\}$. 
Then $\setofdesirsets$ is non-empty because $\mathcal{O}$ is, and the sets of options in $\setofdesirsets$ are proper and $\mathcal{R}$-compatible because of Proposition~\ref{prop:orderequivD} and Definition~\ref{def:abstractaxiom:D}. Hence, $\setofdesirsets$ is a non-empty set of proper $\mathcal{R}$-compatible sets of options. It therefore follows from Theorem~\ref{theo:infinite:abstractrepresentation:twosided:monotonification} that $\rejectset\coloneqq\bigcap\{\rejectset[\desirset]\colon\desirset\in\setofdesirsets\}$ is a proper $\mathrm{mon}(\mathcal{R})$-compatible set of option sets. Now observe that for any $\optset\in\optsets$:
\begin{align*}
\choicefun[\mathcal{O}](\optset)
&=\{\opt\in\optset\colon(\exists \succ\,\in\raisebox{-0.7pt}{$\mathcal{O}$})\,(\nexists\altopt\in\optset)~\altopt\succ\opt\}\\
&=\{\opt\in\optset\colon(\exists \succ\,\in\raisebox{-0.7pt}{$\mathcal{O}$})\,(\nexists\altopt\in\optset)~\altopt-\opt\in\desirset[\succ]\}\\
&=\{\opt\in\optset\colon(\exists\desirset\in\setofdesirsets)\,(\nexists\altopt\in\optset)~\altopt-\opt\in\desirset\}\\
&=\{\opt\in\optset\colon(\exists\desirset\in\setofdesirsets)\,(\optset-\opt)\cap\desirset=\emptyset\}\\
&=\{\opt\in\optset\colon(\exists\desirset\in\setofdesirsets)\,\optset-\opt\notin\rejectset[\desirset]\}\\
&=
\big\{
	\opt\in\optset
	\colon
	\optset-\opt\notin\textstyle\bigcap\{\rejectset[\desirset]\colon\desirset\in\setofdesirsets\}
\big\}
=
\{
	\opt\in\optset
	\colon
	\optset-\opt\notin\rejectset
\},
\end{align*}
where the first equality follows from Equation~\eqref{eq:Cfromorders}, the second from Proposition~\ref{prop:orderequivD}, the fifth from Equation~\eqref{eq:infinite:desirset:to:rejectset} and the last from our choice of $\rejectset$. Since $\rejectset$ is proper, it therefore follows from Proposition~\ref{prop:CequivK} that $\choicefun$ is proper and that $\rejectset=\rejectset[\choicefun]$. Since $\rejectset[\choicefun]=\rejectset$ is $\mathrm{mon}(\mathcal{R})$-compatible, it furthermore follows from Definition~\ref{def:properchoicefunction} that $\choicefun$ is $\mathrm{mon}(\mathcal{R})$-compatible.

For the `only if'-part of the statement, we consider any proper $\mathrm{mon}(\mathcal{R})$-compatible choice function $\choicefun$. It then follows from Proposition~\ref{prop:CequivK} and Definition~\ref{def:properchoicefunction} that $\rejectset[\choicefun]$ is a proper $\mathrm{mon}(\mathcal{R})$-compatible set of option sets. Because of Theorem~\ref{theo:infinite:abstractrepresentation:twosided:monotonification}, this implies that there is a non-empty set $\setofdesirsets$ of proper $\mathcal{R}$-compatible sets of options such that $\rejectset[\choicefun]=\bigcap\{\rejectset[\desirset]\colon\desirset\in\setofdesirsets\}$. With any $\desirset$ in $\setofdesirsets$, we now associate a binary relation $\succ_{\desirset}$ on $\opts$, defined by
\begin{equation}\label{eq:succfromD}
\opt\succ_{\desirset}\altopt
\Leftrightarrow
\opt-\altopt\in\desirset
\text{ for all $\opt,\altopt\in\opts$.}
\end{equation}
Since $\desirset$ is proper, it follows from Proposition~\ref{prop:orderequivD} that $\succ_{\desirset}$ is a proper order and that $\desirset[\succ_{\desirset}]=\desirset$. Since $\desirset$ is proper and $\mathcal{R}$-compatible, it therefore follows from Definition~\ref{def:abstractaxiom:D} that $\succ_{\desirset}$ is $\mathcal{R}$-compatible. Hence, if we let $\mathcal{O}\coloneqq\{\succ_{\desirset}\colon\desirset\in\setofdesirsets\}$, then $\mathcal{O}$ is a set of $\mathcal{R}$-compatible proper orders---$\mathcal{O}\subseteq\mathbf{O}_{\mathcal{R}}$---and, since $\mathcal{D}$ is non-empty, $\mathcal{O}$ is non-empty as well. That $\choicefun=\choicefun[\mathcal{O}]$ follows from the fact that, for all $\optset\in\optsets$:
\begin{align*}
\choicefun(\optset)
&=
\{
	\opt\in\optset
	\colon
	\optset-\opt\notin\rejectset[\choicefun]
\}\\
&=
\big\{
	\opt\in\optset
	\colon
	\optset-\opt\notin\textstyle\bigcap\{\rejectset[\desirset]\colon\desirset\in\setofdesirsets\}
\big\}\\
&=\{\opt\in\optset\colon(\exists\desirset\in\setofdesirsets)\,\optset-\opt\notin\rejectset[\desirset]\}\\
&=\{\opt\in\optset\colon(\exists\desirset\in\setofdesirsets)\,(\optset-\opt)\cap\desirset=\emptyset\}\\
&=\{\opt\in\optset\colon(\exists\desirset\in\setofdesirsets)\,(\nexists\altopt\in\optset)~\altopt-\opt\in\desirset\}\\
&=\{\opt\in\optset\colon(\exists\desirset\in\setofdesirsets)\,(\nexists\altopt\in\optset)~\altopt\succ_{\desirset}\opt\}\\
&=\{\opt\in\optset\colon(\exists \succ\,\in\raisebox{-0.7pt}{$\mathcal{O}$})\,(\nexists\altopt\in\optset)~\altopt\succ\opt\}
=
\choicefun[\mathcal{O}](\optset),
\end{align*}
where the first equality follows from Proposition~\ref{prop:CequivK}, the fourth from Equation~\eqref{eq:infinite:desirset:to:rejectset}, the sixth from Equation~\eqref{eq:succfromD} and the last from Equation~\eqref{eq:Cfromorders}.

For the final part of the statement, we consider any non-empty set $\mathcal{O}\subseteq\mathbf{O}_{\mathcal{R}}
$ such that $\choicefun=\choicefun[\mathcal{O}]$. Then for any $\succ\in\mathcal{O}$ and $\optset\in\optsets$, it follows from Equation~\eqref{eq:Cfromorders} that $\choicefun[\succ](\optset)\subseteq\choicefun[\mathcal{O}](\optset)=\choicefun(\optset)$. Since $\mathcal{O}\subseteq\mathbf{O}_{\mathcal{R}}$, this implies that $\mathcal{O}\subseteq\mathbf{O}_{\mathcal{R}}(\choicefun)$ and therefore, since $\mathcal{O}$ is non-empty, also that $\mathbf{O}_{\mathcal{R}}(\choicefun)$ is non-empty. Furthermore, for any $\optset\in\optsets$, we find that
\begin{equation*}
\choicefun(\optset)=\choicefun[\mathcal{O}](\optset)\subseteq
\choicefun[{\mathbf{O}_{\mathcal{R}}(\choicefun)}](\optset)=\bigcup_{\succ\in{\mathbf{O}_{\mathcal{R}}(\choicefun)}}\choicefun[\succ](\optset)\subseteq\choicefun(\optset),
\end{equation*}
where the first set-inclusion follows from the fact that $\mathcal{O}\subseteq\mathbf{O}_{\mathcal{R}}(\choicefun)$, the second equality follows from Equation~\eqref{eq:Cfromorders} and the last set-inclusion follows from the definition of $\mathbf{O}_{\mathcal{R}}(\choicefun)$. Hence $\choicefun=\choicefun[{\mathbf{O}_{\mathcal{R}}(\choicefun)}]$. It follows that ${\mathbf{O}_{\mathcal{R}}(\choicefun)}$ is indeed the largest non-empty set $\mathcal{O}$ of $\mathcal{R}$-compatible proper orders such that $\choicefun=\choicefun[\mathcal{O}]$.
\end{proof}

\begin{proof}{\textit{\textbf{of Theorem~\ref{theo:properrepresentationone} }}}
For the `if'-part of the statement, we consider a choice function $\choicefun$ and an $\mathcal{R}$-compatible proper order $\succ$ such that $\choicefun=\choicefun[\succ]$. Then for any $\opt,\altopt\in\opts$:
\begin{align*}
\opt\notin\choicefun(\{\opt,\altopt\})
\Leftrightarrow
\opt\notin\choicefun[\succ](\{\opt,\altopt\})
&\Leftrightarrow
(\exists w\in\{\opt,\altopt\})\,w\succ\opt
\\
&\Leftrightarrow
\opt\succ\opt\text{ or }\altopt\succ\opt
\Leftrightarrow
\altopt\succ\opt,
\end{align*}
using Equation~\eqref{eq:Cfromorder} for the second equivalence and~\ref{ax:order:irreflexive} for the last one. The order $\succ$ is therefore characterised by Equation~\eqref{eq:properrepresentationone} and hence uniquely determined by $C$. Since it follows from Equation~\eqref{eq:Cfromorder} and~\eqref{eq:Cfromorders} that $\choicefun[\succ]=\choicefun[\{\succ\}]$, we furthermore know from Theorem~\ref{theo:properrepresentation} that $\choicefun=\choicefun[\succ]=\choicefun[\{\succ\}]$ is proper and $\mathrm{mon}(\mathcal{R})$-compatible. Since $\mathcal{R}\subseteq\mathrm{mon}(\mathcal{R})$, $\choicefun$ is therefore also $\mathcal{R}$-compatible. That $\choicefun$ is binary follows from the fact that, for all $\optset\in\optsets$ and $\opt\in\optset$:
\begin{align*}
\opt\in\choicefun(\optset)
\Leftrightarrow
\opt\in\choicefun[\succ](\optset)
&\Leftrightarrow
(\nexists\altopt\in\optset)~\altopt\succ\opt\\
&\Leftrightarrow
(\nexists\altopt\in\optset)~\opt\notin\choicefun(\{\opt,\altopt\})\\
&\Leftrightarrow
(\forall\altopt\in\optset)~\opt\in\choicefun(\{\opt,\altopt\})\\
&\Leftrightarrow
(\forall\altopt\in\optset\setminus\{\opt\})~\opt\in\choicefun(\{\opt,\altopt\}),
\end{align*}
using Equation~\eqref{eq:Cfromorder} for the second equivalence, Equation~\eqref{eq:properrepresentationone} for the third equivalence, and~\ref{ax:choice:singleton} for the last one.

For the `only if'-part of the statement, we consider a choice function $\choicefun$ that is proper, binary and $\mathcal{R}$-compatible. Let $\succ$ be the binary relation that is defined by Equation~\eqref{eq:properrepresentationone}. For all $\optset\in\optsets$ and $\opt\in\optset$, we then have that
\begin{align*}
\opt\in\choicefun(\optset)
&\Leftrightarrow
(\forall\altopt\in\optset\setminus\{\opt\})~\opt\in\choicefun(\{\opt,\altopt\})\\
&\Leftrightarrow
(\forall\altopt\in\optset)~\opt\in\choicefun(\{\opt,\altopt\})\\
&\Leftrightarrow
(\nexists\altopt\in\optset)~\opt\notin\choicefun(\{\opt,\altopt\})
\Leftrightarrow
(\nexists\altopt\in\optset)~\altopt\succ\opt
\Leftrightarrow
\opt\in\choicefun[\succ](\optset),
\end{align*}
where the first equivalence follows from the binarity of $\choicefun$, the second follows from~\ref{ax:choice:singleton}, the fourth follows from Equation~\eqref{eq:properrepresentationone} and the last from Equation~\eqref{eq:Cfromorder}. Hence, $\choicefun=\choicefun[\succ]$. It remains to show that $\rejectset[\choicefun]=\rejectset[{\desirset[\succ]}]$ and that $\succ$ is an $\mathcal{R}$-compatible proper order. For any $\optset\in\optsets$, we have that
\begin{align*}
\optset\in\rejectset[\choicefun]
\Leftrightarrow
0\notin\choicefun(\optset\cup\{0\})
&\Leftrightarrow
\big(\exists\altopt\in(\optset\cup\{0\})\setminus\{0\}\big)
~0\notin\choicefun(\{0,\altopt\})\\
&\Leftrightarrow
(\exists\altopt\in\optset\setminus\{0\})
~0\notin\choicefun(\{0,\altopt\})\\
&\Leftrightarrow
(\exists\altopt\in\optset)
~0\notin\choicefun(\{0,\altopt\})\\
&\Leftrightarrow
(\exists\altopt\in\optset)
~\altopt\succ0\\
&\Leftrightarrow
(\exists\altopt\in\optset)
~\altopt\in\desirset[\succ]\\
&\Leftrightarrow
\optset\cap\desirset[\succ]\neq\emptyset
\Leftrightarrow
\optset\in\rejectset[{\desirset[\succ]}],
\end{align*}
where the first equivalence follows from Equation~\eqref{eq:fromCtoK}, the second from the binarity of $\choicefun$, the fourth from~\ref{ax:choice:singleton}, the fifth from Equation~\eqref{eq:properrepresentationone}, the sixth from Equation~\eqref{eq:fromOtoD} and the last from Equation~\eqref{eq:infinite:desirset:to:rejectset}. Hence, $\rejectset[\choicefun]=\rejectset[{\desirset[\succ]}]$. Since $\choicefun$ is proper and $\mathcal{R}$-compatible, it follows from Proposition~\ref{prop:CequivK} and Definition~\ref{def:abstractaxiom:K} that $\rejectset[\choicefun]$ is a proper set of option sets that is $\mathcal{R}$-compatible. Since $\rejectset[\choicefun]=\rejectset[{\desirset[\succ]}]$, it therefore follows from Proposition~\ref{prop:DcoherentiffKDis} that $\desirset[\succ]$ is proper and from Definitions~\ref{def:abstractaxiom:D} and~\ref{def:abstractaxiom:K} and Equation~\eqref{eq:infinite:desirset:to:rejectset} that $\desirset[\succ]$ is $\mathcal{R}$-compatible. Consider now any $\opt,\altopt\in\opts$. Then
\begin{align*}
\opt\succ\altopt
\Leftrightarrow
\altopt\notin\choicefun(\{\altopt,\opt\})
\Leftrightarrow
0\notin\choicefun(\{0,\opt-\altopt\})
\Leftrightarrow
\opt-\altopt\succ0
\Leftrightarrow
\opt-\altopt\in\desirset[\succ],
\end{align*}
where the first and third equivalence follow from Equation~\eqref{eq:properrepresentationone}, the second follows from~\ref{ax:choice:translation} and the fourth follows from Equation~\eqref{eq:fromOtoD}. Since $\desirset[\succ]$ is a proper set of options, it therefore follows from Proposition~\ref{prop:orderequivD} that $\succ$ is a proper order. That $\succ$ is $\mathcal{R}$-compatible, finally, follows from Definition~\ref{def:abstractaxiom:D} and the properness and $\mathcal{R}$-compatibility of $\desirset[\succ]$.
\end{proof}

}{}

\end{document}